\DeclareMathOperator*{\argmax}{argmax}
\DeclareMathOperator*{\argmin}{argmin}
\newtheorem{definition}{Definition}
\newtheorem{theorem}{Theorem}
\newtheorem{lemma}{Lemma}
\newcommand{\change}[2]{}
\newcommand{\lchange}[2]{}
\newcommand{\changed}[3]{#3}
\newcommand{\lchanged}[3]{#3}
\begin{document}

\title{Robust Visual Tracking Revisited: From Correlation Filter to Template Matching}

\author{Fanghui Liu, Chen Gong, Xiaolin Huang, Tao Zhou,
Jie Yang, and Dacheng Tao

\thanks{
This work was supported in part by the National Natural Science Foundation of China under Grant 61572315, Grant 6151101179, Grant 61602246, Grant 61603248, in part by 973 Plan of China under Grant 2015CB856004, in part by the Natural Science Foundation of Jiangsu Province under Grant BK20171430, in part by the ``Six Talent Peak" Project of Jiangsu Province of China under Grant DZXX-027, in part by the China Postdoctoral Science Foundation under Grant No. 2016M601597, and in part by Australian Research Council Projects under Grant FL-170100117, Grant DP-180103424, Grant DP-140102164, and Grant LP-150100671. (\emph{Corresponding author: Jie Yang.})}
\thanks{
 F. Liu, X. Huang, T. Zhou and J. Yang are with Institute of Image Processing and Pattern Recognition, Shanghai Jiao Tong University, Shanghai 200240, China (e-mail: lfhsgre@sjtu.edu.cn; xiaolinhuang@sjtu.edu.cn; zhou.tao@sjtu.edu.cn; jieyang@sjtu.edu.cn).
 }
  \thanks{
 C. Gong is with the School of Computer Science and Engineering, Nanjing
University of Science and Technology, Nanjing 210094, China (e-mail: chen.gong@njust.edu.cn).}
\thanks{D. Tao is with the UBTECH Sydney Artificial Intelligence Centre and the School of Information Technologies, the Faculty of Engineering and Information Technologies, the University of Sydney, 6 Cleveland St, Darlington, NSW 2008, Australia (e-mail: dacheng.tao@sydney.edu.au).}
\thanks{\copyright 20XX IEEE. Personal use of this material is permitted. Permission from IEEE must be obtained for all other uses, in any current or future media, including reprinting/republishing this material for advertising or promotional purposes, creating new collective works, for resale or redistribution to servers or lists, or reuse of any copyrighted component of this work in other works.}
}
\markboth{}%
{Shell \MakeLowercase{\textit{et al.}}: Bare Demo of IEEEtran.cls for Journals}

\maketitle

\begin{abstract}
In this paper, we propose a novel matching based tracker by investigating the relationship between template matching and the recent popular correlation filter based trackers (CFTs).
Compared to the correlation operation in CFTs, a sophisticated similarity metric termed ``mutual buddies similarity" (MBS) is proposed to exploit the relationship of multiple reciprocal nearest neighbors for target matching.
By doing so, our tracker obtains powerful discriminative ability on distinguishing target and background as demonstrated by both empirical and theoretical analyses.
Besides, instead of utilizing single template with the improper updating scheme in CFTs, we design a novel online template updating strategy named ``memory filtering" (MF), which aims to select a certain amount of representative and reliable tracking results in history to construct the current stable and expressive template set.
This scheme is beneficial for the proposed tracker to comprehensively ``understand'' the target appearance variations, ``recall" some stable results. 
Both qualitative and quantitative evaluations on two benchmarks suggest that the proposed tracking method performs favorably against some recently developed CFTs and other competitive trackers.
\end{abstract}

\begin{IEEEkeywords}
visual tracking, template matching, mutual buddies similarity, memory filtering
\end{IEEEkeywords}

\IEEEpeerreviewmaketitle

\section{Introduction}
\label{sec:intro}
\IEEEPARstart{V}{isual} tracking is the problem of continuously localizing a pre-specified object in a video sequence.
Although much effort \cite{Choi2017CVPR_mini,Zhangle2017CVPR_mini,liu2017NMC_mini,Lan2014Multi_mini} has been
made, it still remains a challenging task to find a lasting solution for object tracking due to the intrinsic factors (\emph{e.g.}, shape deformation and rotation in-plane or out-of-plane) and extrinsic factors (\emph{e.g.}, partial occlusions and background clutter).
\begin{figure}
\begin{center}
\includegraphics[width=0.46\textwidth]{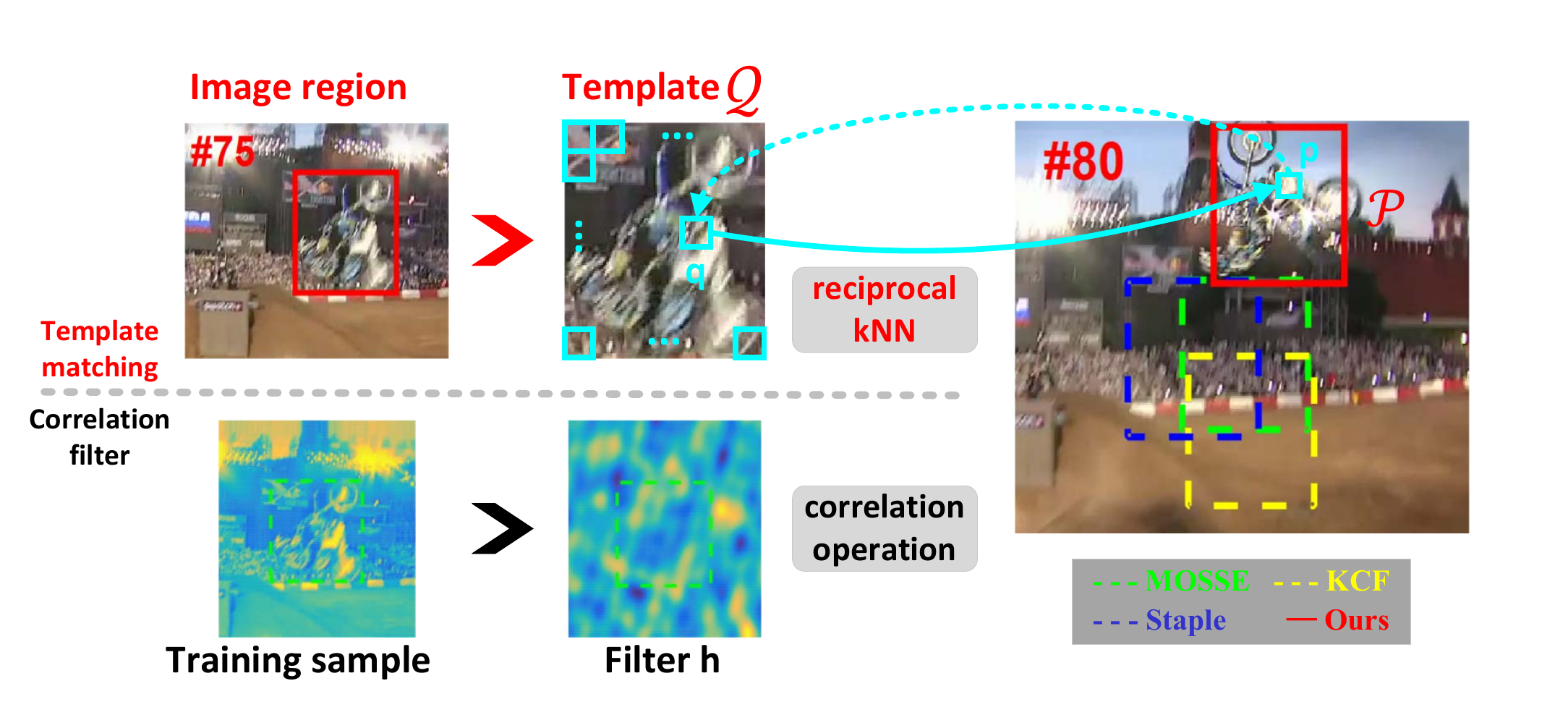}
\caption{\footnotesize Illustration of the relationship between our template matching based tracker (top panel) and CFTs (down panel). Our method considers a reciprocal $k$-NN scheme for similarity computation, and thus performs robustly to drastic appearance variations when compared to representative correlation filter based trackers including MOSSE \cite{Bolme2010_mini}, KCF \cite{henriques2015high_mini}, and Staple \cite{Bertinetto2016CVPR_mini}.
}
\label{vs}
\end{center}
\end{figure}

Recently, correlation filter based trackers (CFTs) have made significant achievements with high computational efficiency.
The earliest work was done by Bolme $et~al.$ \cite{Bolme2010_mini}, in which the filter $\mathbf{h}$ is learned by minimizing the total squared error between the actual output and the desired correlation output $\mathcal{Y}=\big\{ \mathbf{y}_i\big\}^{n}_{i=1}$ on a set of sample patches $\mathcal{X}=\big\{ \mathbf{x}_i\big\}^{n}_{i=1}$.
The target location can then be predicted by finding the maximum of the actual correlation response map $\mathbf{y}'$, that is computed as:
\begin{eqnarray*}\label{response}
  \mathbf{y}' = \mathcal{F}^{-1}(\hat{\mathbf{x}}'\odot \mathbf{\hat{h}}^*)\,,
\end{eqnarray*}
where $\mathcal{F}^{-1}$ is the inverse Fourier transform operation, and $\hat{\mathbf{x}}'$, $\mathbf{\hat{h}}$ are the Fourier transform of a new patch $\mathbf{x}'$ and the filter $\mathbf{h}$, respectively.
The symbol $*$ means the complex conjugate, and $\odot$ denotes element-wise multiplication.
In signal processing \changed{M2.4}{\link{R2.4}}{notation}, the learned filter $\mathbf{h}$ is also called a template, and accordingly, the correlation operation can be regarded as a similarity measure.
As a result, such CFTs share the similar framework with the conventional template matching based methods, as both of them aim to find the most similar region to \changed{M2.4}{\link{R2.4}}{the template} via a computational similarity metric (\emph{i.e.}, the correlation operation in CFTs or the reciprocal $k$-NN scheme in this paper) as shown in Fig.~\ref{vs}.

Although much progress has been made in CFTs, they often do not achieve satisfactory performance in some complex situations such as nonrigid object deformations and partial occlusions.
There are three reasons as follows.
First, by the correlation operation, all pixels within the candidate region $\mathbf{x}'$ and the template $\mathbf{h}$ are considered to measure their similarity.
In fact, a region may contain a considerable amount of redundant pixels that are irrelevant to its semantic meaning, and these pixels should not be taken into account for the similarity computation.
Second, the learned filter, as a single and global patch, often poorly approximates the object that undergoes nonlinear deformation and significant occlusions.
Consequently, it easily leads to model corruption and eventual failure.
Third, most CFTs usually update their models at each frame without considering whether the tracking result is accurate or not.

\lchanged{M1.5.1}{\link{R1.5},\link{R2.4}}{To sum up}, the tracking performance of CFTs is limited due to the direct correlation operation, and the single template with improper updating scheme.
Accordingly, we propose a novel template matching based tracker termed TM$^3$ ({\bf T}emplate {\bf M}atching via {\bf M}utual buddies similarity and {\bf M}emory filtering) tracker.
In TM$^3$, the similarity based reciprocal $k$ nearest neighbor is exploited to conduct target matching, and the scheme of memory filtering can select ``representative" and ``reliable" results to learn different types of templates.
By doing so, our tracker performs robustly to undesirable appearance variations.

\subsection{Related Work}
\label{sec:RW}
Visual tracking has been intensively studied and numerous trackers have been reviewed in the surveys such as \cite{Wu2015_mini,LiPRO2016_mini}.
Existing tracking algorithms can be roughly grouped into two categories: generative methods and discriminative methods.
Generative models aim to find the most similar region among the sampled candidate regions to the given target.
Such generative trackers are usually built on sparse representation \cite{Zhong2014_mini,Jia2016Visual_mini} or subspace learning \cite{Ross2008a_mini}.
The rationale of sparse representation based trackers is that the target can be represented by the atoms in \lchanged{M1.5.2}{\link{R1.5}}{an over-complete dictionary} with a sparse coefficient vector.
Subspace analysis utilizes PCA subspace \cite{Ross2008a_mini}, Riemannian manifold on a tangent space \cite{Li2008Robust_mini}, and other linear/nonlinear subspaces to model the relationship of object appearances.

In contrast, discriminative methods formulate object tracking as a classification problem, in which a classifier is trained to distinguish the foreground (\emph{i.e.}, the target) from the background.
Structured SVM \cite{Ning2016CVPR_mini} and \lchanged{M2.4}{\link{R2.4}}{the} correlation filter \cite{henriques2015high_mini,Bertinetto2016CVPR_mini} are representative tools for designing a discriminative tracker.
Structured SVM treats object tracking as a structured output prediction problem that admits a consistent target representation for both learning and detection.
CFTs train a filter, which encodes the target appearance, to yield strong response to a region that is similar to the target while suppressing responses to distractors.
Since our TM$^3$ method is based on the template matching mechanism, we briefly review several representative matching based trackers as follows.


Matching based trackers can be cast into two categories: template matching based methods and keypoint matching based approaches.
A template matching based tracker directly compares the image patches sampled from the current frame with the known template.
The primitive tracking method \cite{Sebastian2007Tracking_mini} \changed{M2.4}{\link{R2.4}}{employs} normalized cross-correlation (NCC) \cite{BriechleNCC_mini}.
The advantage of NCC lies in its simplicity for implementation and thus it is used in some recent trackers such as the TLD tracker \cite{kalal2012tracking_mini}.
Another representative template matching based tracker is proposed by Shaul $et~al.$ \cite{Oron2014klt_mini}, which extends \changed{M2.4}{\link{R2.4}}{the} Lucas-Kanade Tracking algorithm, and combines template matching with pixel based object/background segregation to build a unified Bayesian framework.
Apart from template matching, keypoint matching trackers have also gained much popularity and achieved a great success.
For example, Nebehay $et~al.$ \cite{nebehay2014consensus_mini} exploit keypoint matching and optical flow to enhance the tracking performance.
Hong $et~al.$ \cite{hong2015multi_mini} incorporate a RANSAC-based geometric matching for long-term tracking.
In \cite{Nebehay2015Clustering_mini}, by establishing correspondences on several deformable parts (served as key points) in an object, \changed{M2.4}{\link{R2.4}}{a} dissimilarity measure is proposed to evaluate their geometric compatibility for the clustering of correspondences, so the inlier correspondences can be separated from outliers.


\subsection{Our Approach and Contributions}
Based on the above discussion, we develop a similarity metric called ``Mutual Buddies Similarity" (MBS) to evaluate the similarity between two image regions\footnote{In our paper, an image region can be a template, or a candidate region such as a target region and a target proposal.}, based on the Best Buddies Similarity (BBS) \cite{Dekel_BBS_mini} that is originally designed for general image matching problem.
Herein, every image region is split into a set of \changed{M1.5.4}{\link{R1.5}}{non-overlapped} small image patches.
As shown in Fig.~\ref{vs}, we only consider the patches within the reciprocal nearest neighbor relationship, that is, one patch $p$ in a candidate region $\mathcal{P}$ is the nearest neighbor of the other one $q$ in the template $\mathcal{Q}$, and vice versa.
Thereby, the similarity computation relies on a subset of these ``reliable" pairs, and thus is robust to significant outliers and appearance variations.
Further, to improve the discriminative ability of this metric for visual tracking,
 the scheme of multiple reciprocal nearest neighbors is incorporated into the proposed MBS.
As shown in Fig.~\ref{knnfig}, for a certain patch $q$ in the target template $\mathcal{Q}$, only considering the $1$-reciprocal nearest neighbor of $q$ is definitely inadequate for a tracker to distinguish two similar candidate regions $\mathcal{P}_1$ and $\mathcal{P}_2$.
By exploiting these different 2nd, 3rd and 4th nearest neighbors, MBS can distinguish these candidate regions when they are extremely similar.
Moreover, such discriminative ability inherited by MBS is also theoretically demonstrated in our paper.
\begin{figure}
\begin{center}
\includegraphics[width=0.48\textwidth]{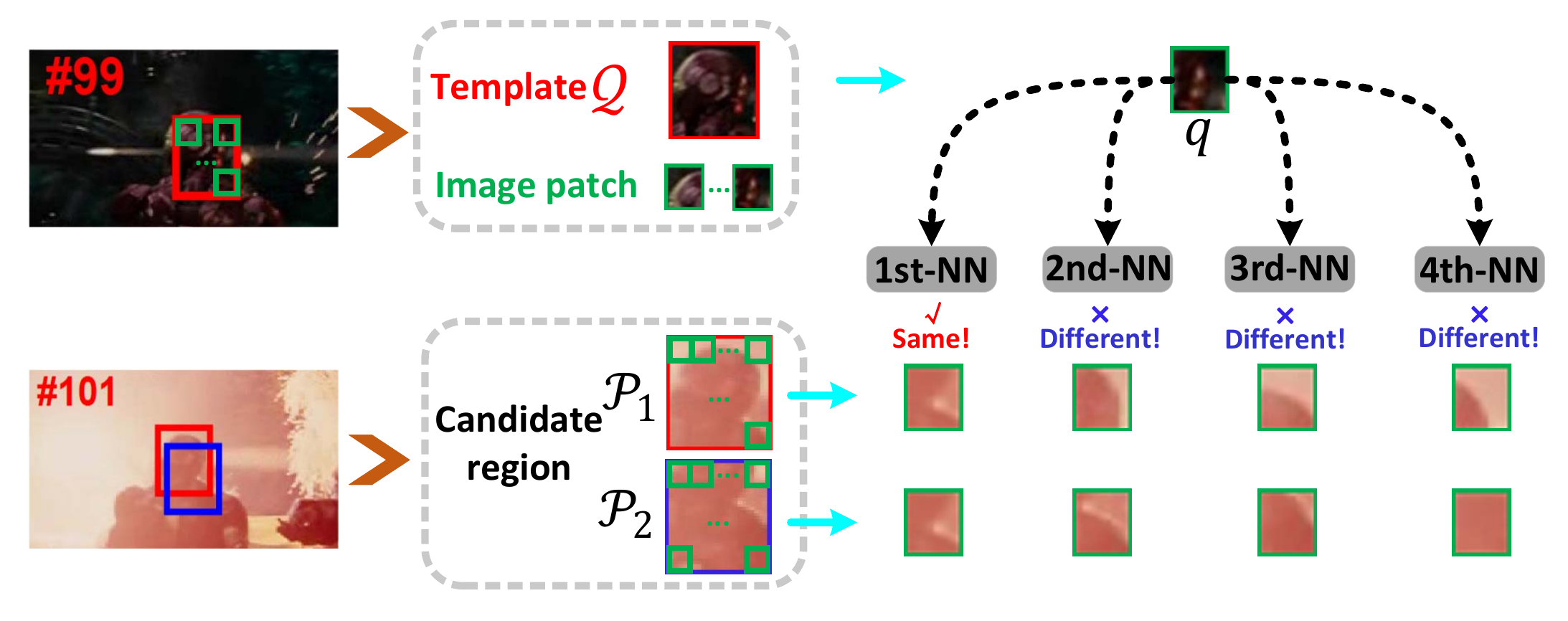}
\caption{\footnotesize Illustration of the nearest neighbors relationship of the given image patch $q$.
In our method, a template $\mathcal{Q}$, and two candidate regions $\mathcal{P}_1$ and $\mathcal{P}_2$  are split into a set of small image patches.
Specifically, we pick up a small patch $q$ in $\mathcal{Q}$ and then investigate its nearest neighbors (\emph{i.e.}, small image patches) in $\mathcal{P}_1$ and $\mathcal{P}_2$, respectively.
We see that two small patches selected as the 1st nearest neighbors of $q$ in $\mathcal{P}_1$ and $\mathcal{P}_2$ are the same; while its corresponding 2nd, 3rd and 4th nearest neighbors in these two regions are very dissimilar.
}
\label{knnfig}
\end{center}
\end{figure}

For template updating, two types of templates Tmpl$_r$ and Tmpl$_e$ are \changed{M1.5.5}{\link{R1.5}}{designed} in a comprehensive updating scheme.
The template Tmpl$_r$ is established by carefully selecting both ``representative" and ``reliable" tracking results \lchanged{M1.5.6}{\link{R1.5}}{during a long period}.
Herein, ``representative" denotes that a template should well represent the target under various appearances during the previous video frames.
The terminology ``reliable" implies that the selected template should be accurate and stable.
To this end, a memory filtering strategy is proposed in our TM$^3$ tracker to carefully pick up the representative and reliable tracking results during past frames, so that the accurate template Tmpl$_r$ can be properly constructed.
By doing so,  \lchanged{M2.4}{\link{R2.4}}{the} memory filtering strategy is able to ``recall" some stable results and ``forget" some results under abnormal situations.
Different from Tmpl$_r$, the template $\text{Tmpl}_e$ is frequently updated from the latest frames to timely adapt to the target's appearance changes within a short period.
Hence, compared to using the single template in CFTs, the combination of Tmpl$_e$ and Tmpl$_r$ is beneficial for our tracker to capture the target appearance variations in different periods.

Extensive evaluations on \lchanged{M2.4}{\link{R2.4}}{the} Object Tracking Benchmark (OTB) \cite{Wu2015_mini} (including OTB-50 and OTB-100) and Princeton Tracking Benchmark (PTB) \cite{song2013tracking_mini} suggest that in most cases our method significantly improves the performance of template matching based trackers and also performs favorably against the state-of-the-art trackers.


\label{sec:tracker}
\begin{figure*}
\begin{center}
\includegraphics[width=0.9\textwidth]{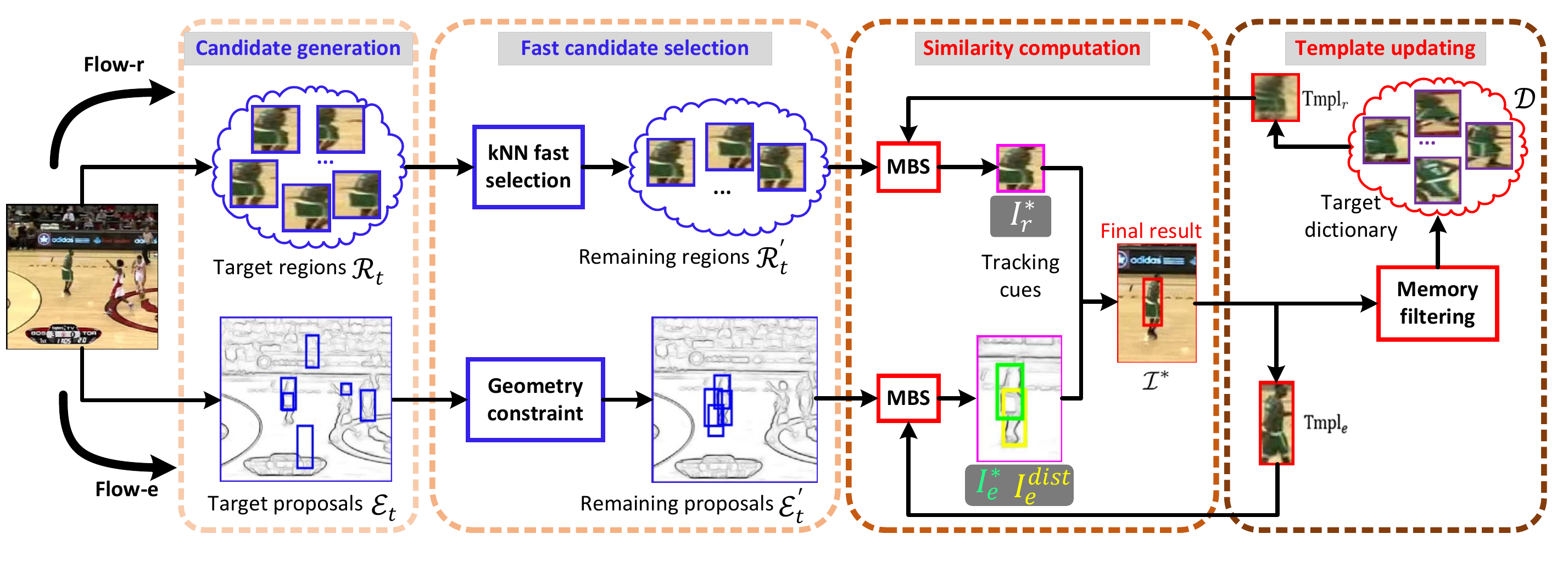}
\caption{\footnotesize The diagram of the proposed TM$^3$ tracker contains four main steps, namely \emph{candidate generation}, \emph{fast candidate selection}, \emph{similarity computation}, and \emph{template updating}.
Numerous potential candidate regions $\mathcal{R}_t\bigcup\mathcal{E}_t$ are produced in \emph{candidate generation} step, and then processed by the \emph{fast candidate selection} step to form the refined $\mathcal{R}'_t\bigcup\mathcal{E}'_t$.
In \emph{similarity computation} step, MBS is used to evaluate the similarities between these refined potential regions and the templates Tmpl$_r$, Tmpl$_e$, respectively.
Such step results in three tracking cues including $\mathcal{I}_r^*=\argmax \text{MBS}(\mathcal{R}'_t,\text{Tmpl}_r)$, $\mathcal{I}_e^*=\argmax \text{MBS}(\mathcal{E}'_t,\text{Tmpl}_e)$ with green box, and $\mathcal{I}_e^{dist}=\argmin dist(\mathcal{E}'_t,\mathcal{I}_r^*)$ with yellow box (the definition of distance metric $dist$ can be found in Section \ref{sec:screen}). The final tracking result $\mathcal{I}^*$ is jointly decided by above three cues. Finally, two types of templates $\text{Tmpl}_r$ and $\text{Tmpl}_e$ are updated via the memory filtering strategy. Above process iterates until all the frames of a video sequence have been processed.
}
\label{framework}
\end{center}
\end{figure*}

\section{The Proposed TM$^3$ Tracker}
 \label{sec:TM}
Our TM$^3$ tracker contains two main flows, \emph{i.e.} \texttt{Flow-r} ( \texttt{r} denotes random sampling) and  \texttt{Flow-e} (\texttt{e} is short for EdgeBox).
Each flow will undergo four steps, namely \emph{candidate generation}, \emph{fast candidate selection}, \emph{similarity computation}, and \emph{template updating} as shown in \lchanged{M1.5.10}{\link{R1.5}}{Fig.~\ref{framework}}.

In \texttt{Flow-r} process, at the $t$th frame, the $N_r$ target regions $\mathcal{R}_t=\{\mathcal{I}(\mathbf{c}_r^i,s_r^i)\}_{i=1}^{N_r}$ are randomly sampled from a Gaussian distribution, where $\mathcal{I}(\mathbf{c}_r^i,s_r^i)$\footnote{Note that the time index $t$ is omitted and we denote $\mathcal{I}(\mathbf{c}_r^i,s_r^i)$ as $\mathcal{I}_r^i$ for simplicity.} is the $i$th image region with the center coordinate $\mathbf{c}_r^i=[c_r^{ix},c_r^{iy}]$ and scale factor $s_r^i$.
To efficiently reduce the \lchanged{M2.4}{\link{R2.4}}{computational} complexity, a fast $k$-NN selection algorithm \cite{zhou2016graph_mini} is used to form the refined candidate regions $\mathcal{R}'_t=\{\mathcal{I}(\mathbf{c}_r^i,s_r^i)\}_{i=1}^{N'_r}$ that are composed of $N'_{r}$ nearest neighbors of the tracking result at the $(t-1)$th frame from $\mathcal{R}_t$.
After that, MBS evaluates the similarities between these target regions and the template Tmpl$_r$, and then outputs $\mathcal{I}(\mathbf{c}_r^*,s_r^*)$ with the highest similarity score.

In \texttt{Flow-e} process, at the $t$th frame, the EdgeBox approach \cite{zitnick2014edge_mini} is utilized to generate a set $\mathcal{E}_t=\{\mathcal{I}(\mathbf{c}_e^j,s_e^j)\}_{j=1}^{N_e}$ with $N_e$ target proposals, where $\mathcal{I}(\mathbf{c}_e^j,s_e^j)$ ($\mathcal{I}_e^j$ for simplicity) is the $j$th image region with the center coordinate $\mathbf{c}_e^j=[c_e^{jx},c_e^{jy}]$ and scale factor $s_e^j$.
Subsequently, \changed{M1.5.7}{\link{R1.5}, \link{R2.4}}{most non-target proposals} are filtered out by exploiting the geometry constraint.
After that, only a small amount of $N'_{e}$ potential proposals $\mathcal{E}'_t=\{\mathcal{I}(\mathbf{c}_e^j,s_e^j)\}_{j=1}^{N'_{e}}$ are evaluated by MBS to indicate \changed{M1.5.8}{\link{R1.5}}{how they are similar to} the template Tmpl$_e$.
This process generates two tracking cues $\mathcal{I}_e^*$ and $\mathcal{I}_e^{dist}$, which will be further described in Section \ref{sec:screen}.
Based on the three tracking cues generated by above \changed{M2.4}{\link{R2.4}}{the} two flows, the final tracking result $\mathcal{I}^*$ at the $t$th frame is obtained by fusing $\mathcal{I}_r^*$, $\mathcal{I}_e^*$ and $\mathcal{I}_e^{dist}$ with details introduced in Section \ref{sec:fuse}.

Note that in the entire tracking process, illustrated in Fig.~\ref{framework}, similarity computation and template updating are critical for our tracker to achieve satisfactory performance, so we will detail them in Sections \ref{sec:MBS} and  \ref{sec:al}, respectively.
\subsection{Mutual Buddies Similarity}
\label{sec:MBS}
In this section, we detail the MBS for computing the similarity between two image regions, which aims to improve the discriminative ability of our tracker.

\subsubsection{The definition of MBS}
\label{sec:MBSd}
 In our TM$^3$ tracker, every image region is split into a set of non-overlapped $3 \times 3$ image patches.
Without loss of generality, we denote a candidate region as a set $\mathcal{P}=\{ \mathbf{p}_i \}_{i=1}^N$, where $\mathbf{p}_i$ is a small patch and $N$ is the number of such small patches.
Likewise, a template (Tmpl$_r$ or Tmpl$_e$) is represented by the set of $\mathcal{Q}=\{ \mathbf{q}_j \}_{j=1}^M$ of size $M$.
The \changed{M2.4}{\link{R2.4}}{objective} of MBS is to reasonably evaluate the similarity between $\mathcal{P}$ and $\mathcal{Q}$, so that a faithful and discriminative similarity score can be assigned to the candidate region $\mathcal{P}$.
%

To design MBS, we begin with the definition of a similarity metric MBP between two patches $\{ \mathbf{p}_i \in \mathcal{P}, \mathbf{q}_j \in \mathcal{Q}\}$.
\changed{M1.5.9}{\link{R1.5}, \link{R2.4}}{Assuming that $\mathbf{q}_j$ is the $r$th nearest neighbor of $\mathbf{p}_i$ in the set of $\mathcal{Q}$ (denoted as $\mathbf{q}_j = \text{NN}_r(\mathbf{p}_i,\mathcal{Q})$), and meanwhile $\mathbf{p}_i$ is the $s$th nearest neighbor of $\mathbf{q}_j$ in $\mathcal{P}$ (denoted as $\mathbf{p}_i =\text{NN}_s(\mathbf{q}_j,\mathcal{P})$),} then the similarity \text{MBP} of two patches $\mathbf{p}_i$ and $\mathbf{q}_j$ is:
\begin{equation}\label{mBBPdef}
  \text{MBP}(\mathbf{p}_i,\!\mathbf{q}_j) =
  \mathrm{e}^{-\frac{rs}{\sigma_1}},\!~\text{if}~ \mathbf{q}_j = \text{NN}_r(\mathbf{p}_i,\mathcal{Q})  \wedge
 \mathbf{p}_i =\text{NN}_s(\mathbf{q}_j,\mathcal{P})\,,
\end{equation}
where  \changed{M2.4.1}{\link{R2.4}}{$\sigma_1$ is a tuning parameter.
In our experiment, we empirically set it to 0.5.}
Such similarity metric \text{MBP} evaluates the closeness level between two patches by the scheme of multiple reciprocal nearest neighbors. 
Therefore, MBS between $\mathcal{P}$ and $\mathcal{Q}$ is defined as\footnote{In the experimental setting, the set sizes of $\mathcal{P}$ and $\mathcal{Q}$ have been set to the same value.}:
\begin{equation}\label{MBS}
 \text{MBS}(\mathcal{P},\mathcal{Q}) = \frac{1}{\min\{M,N\}} \cdot \sum_{i=1}^N\sum_{j=1}^M \text{MBP} (\mathbf{p}_i,\mathbf{q}_j)\,,
\end{equation}
One can see that MBS is the statistical average of MBP.
Specifically, the similarity metric BBP used in BBS \cite{Dekel_BBS_mini} is defined as:
\begin{equation*}\label{BBPdef}
 {\text{BBP}}(\mathbf{p}_i,\mathbf{q}_j)\!=\!\! \left\{
\begin{array}{rcl}
\begin{split}
&\!\!1  ~~ \mathbf{q}_j \!=\! \text{NN}_1(\mathbf{p}_i,\!\mathcal{Q})  \wedge
 \mathbf{p}_i \!= \!\text{NN}_1(\mathbf{q}_j,\!\mathcal{P}); \\
&\!\!0     ~~\text{otherwise}. \\
\end{split}
\end{array} \right.
\end{equation*}
Herein, the metric ${\text{BBP}}$ can be viewed as a special case of the proposed similarity \text{MBP} in Eq.~\eqref{mBBPdef} when $r$ and $s$ are set to 1.

As aforementioned, BBS shows less discriminative ability than MBS for object tracking, and next we will \lchanged{M1.5.11}{\link{R1.5}}{ theoretically} explain the reason.
Note that the factor $ \frac{1}{\min\{M,N\}}$ defined in Eq.~\eqref{MBS} does not \lchanged{M1.5.12}{\link{R1.5}}{have influence on} the theoretical result, so we  investigate the relationship between BBS and MBS without any factor to verify the effectiveness of such multiple nearest neighbor scheme.
To this end, we introduce first-order statistic (mean value) and second-order statistic (variance) of \lchanged{M1.5.13}{\link{R1.5},\link{R2.4}}{two such} metrics to analyse their respective discriminative (or ``scatter") ability.
We begin with the following statistical definition:
\begin{definition}\label{definite}
Suppose two image patches $\mathbf{p}_i$ and $\mathbf{q}_j$ are randomly sampled from two given distributions $\mathrm{Pr}\{P\}$ and $\mathrm{Pr}\{Q\}$\footnote{Such general definition does not rely on a specific distribution of $\mathrm{Pr}\{P\}$ and $\mathrm{Pr}\{Q\}$. The details of $\mathbb{E}_{\text{BBS}}(\mathcal{P},\mathcal{Q})$ can be found in Eq.~(4) in \cite{Dekel_BBS_mini}.} corresponding to the sets $\mathcal{P}$ and $\mathcal{Q}$, respectively,
and $\mathbb{E}_{\text{MBS}}(\mathcal{P},\mathcal{Q})$ is the expectation of the similarity score between a pair of patches $\{\mathbf{p}_i, \mathbf{q}_j\}$ computed by \text{MBP} over all possible pairwise patches in $\mathcal{P}$ and $\mathcal{Q}$, then we have:
\begin{equation}\label{EBBP}
\begin{split}
 &\mathbb{E}_{\text{MBS}}(\mathcal{P},\mathcal{Q})\! =\!\!\! \int_{P}\!\int_{Q}\! {\text{MBP}}(\mathbf{p}_i,\mathbf{q}_j) \mathrm{Pr}\{P\}
  \mathrm{Pr}\{Q\} \mathrm{d}P \mathrm{d}Q\,,\\
  &\mathbb{E}_{\text{BBS}}(\mathcal{P},\mathcal{Q})\! =\!\!\! \int_{P}\!\int_{Q}\! {{\text{BBP}}}(\mathbf{p}_i,\mathbf{q}_j) \mathrm{Pr}\{P\}
  \mathrm{Pr}\{Q\} \mathrm{d}P \mathrm{d}Q\,.
  \end{split}
\end{equation}
\end{definition}
By this definition, the variance $\mathbb{V}_{\text{MBS}}(\mathcal{P},\mathcal{Q})$ and $\mathbb{V}_{\text{BBS}}(\mathcal{P},\mathcal{Q})$ can be easily computed.
Formally, we have the following three lemmas.
We begin with the simplification of $\mathbb{E}_{\text{MBS}}(\mathcal{P},\mathcal{Q})$ in Lemma \ref{proEsim}, and next compute $\mathbb{E}_{\text{MBS}^2}(\mathcal{P},\mathcal{Q})$ and $\mathbb{E}_{\text{MBS}}^2(\mathcal{P},\mathcal{Q})$ to obtain $\mathbb{V}_{\text{MBS}}(\mathcal{P},\mathcal{Q})$ in Lemma \ref{proDsim}.
Lastly, we seek for the relationship between $\mathbb{E}_{\text{MBS}^2}(\mathcal{P},\mathcal{Q})$ and $\mathbb{E}_{\text{MBS}}(\mathcal{P},\mathcal{Q})$ in Lemma \ref{Lemmare}.
\changed{M1.5.14}{\link{R1.5}}{The proofs of these three lemmas} are put into Appendix \ref{sec:app1}, \ref{sec:app2} and \ref{sec:app3}, respectively.
\begin{lemma}\label{proEsim}
Let $F_{P}(x)$, $F_{Q}(x)$ be the cumulative distribution functions (CDFs) of $\mathrm{Pr}\{P\}$ and $\mathrm{Pr}\{Q\}$, respectively, and
assuming that each patch is independent of the others, then the multivariate integral in Eq.~\eqref{EBBP} can be represented by Eq.~\eqref{Esim}, where $p^+=p+d(p,q)$,  $p^-=p-d(p,q)$, $q^+=q+d(p,q)$, and $q^-=q-d(p,q)$.
\end{lemma}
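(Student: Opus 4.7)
The plan is to reduce the multivariate integral in Eq.~\eqref{EBBP} to an integral over just the pair $(p,q)$ by handling the nearest-neighbor constraint via a standard order-statistics computation. Since $\text{MBP}(\mathbf{p}_i,\mathbf{q}_j)$ is nonzero only on the event $\{\mathbf{q}_j=\text{NN}_r(\mathbf{p}_i,\mathcal{Q})\wedge \mathbf{p}_i=\text{NN}_s(\mathbf{q}_j,\mathcal{P})\}$, and equals $e^{-rs/\sigma_1}$ on that event, I would first condition on the values $\mathbf{p}_i=p$ and $\mathbf{q}_j=q$, set $d=d(p,q)$, and compute the conditional probability of this event using the assumed mutual independence of the patches.

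For $q$ to be the $r$-th nearest neighbor of $p$ within $\mathcal{Q}$, exactly $r-1$ of the other $M-1$ i.i.d.\ samples from $\Pr\{Q\}$ must lie strictly closer to $p$ than $q$ does, i.e.\ within the interval $[p^-,p^+]$, while the remaining $M-r$ must lie outside. By the i.i.d.\ assumption this is a binomial count with per-sample success probability $F_Q(p^+)-F_Q(p^-)$, yielding
\begin{equation*}
\binom{M-1}{r-1}\bigl[F_Q(p^+)-F_Q(p^-)\bigr]^{r-1}\bigl[1-F_Q(p^+)+F_Q(p^-)\bigr]^{M-r}.
\end{equation*}
A symmetric argument inside $\mathcal{P}$ gives the analogous factor with $F_P$, $q^\pm$, $N$ and $s$ replacing $F_Q$, $p^\pm$, $M$ and $r$. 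Because the sets $\mathcal{P}$ and $\mathcal{Q}$ are sampled independently, the two events factor once conditioned on $(p,q)$; multiplying by the weight $e^{-rs/\sigma_1}$, summing over $r=1,\dots,M$ and $s=1,\dots,N$, and integrating against $\Pr\{P\}\Pr\{Q\}\,\mathrm{d}p\,\mathrm{d}q$ then yields the claimed representation in Eq.~\eqref{Esim}.

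The main obstacle I expect is bookkeeping around the use of the univariate CDFs. The expression $F_Q(p^+)-F_Q(p^-)$ tacitly identifies ``the probability mass within distance $d$ of $p$'' with an interval probability on $\mathbb{R}$; one must either restrict to the scalar summary used in \cite{Dekel_BBS_mini} (whose Eq.~(4) is referenced here) or reinterpret the quantity as the measure of a metric ball under $\Pr\{Q\}$ and verify that the two coincide under the assumptions on $P,Q$. Once that identification is made, exchanging the finite double sum over $(r,s)$ with the integral is trivial by dominated convergence since $|e^{-rs/\sigma_1}|\le 1$ and $r\le M$, $s\le N$, so the remainder of the argument is an elementary order-statistics calculation without further subtleties.
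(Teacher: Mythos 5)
Your order-statistics setup is sound as far as it goes (and it is a different, more exact route than the paper's), but the final step --- ``summing over $r,s$ and integrating then yields Eq.~\eqref{Esim}'' --- is a genuine gap: the exact binomial sum you construct does \emph{not} equal Eq.~\eqref{Esim}. What you would obtain is
\begin{equation*}
\iint \sum_{r,s} \e^{-rs/\sigma_1}\binom{M-1}{r-1}u^{r-1}(1-u)^{M-r}\binom{N-1}{s-1}v^{s-1}(1-v)^{N-s} f_P(p)f_Q(q)\,\mathrm{d}p\,\mathrm{d}q
\end{equation*}
with $u=F_Q(p^+)-F_Q(p^-)$ and $v=F_P(q^+)-F_P(q^-)$, which is a genuinely different object from the polynomial expression $1-\frac{1}{\sigma_1}MN\iint uv\,f_Pf_Q+\frac{1}{2\sigma_1^2}M^2N^2\iint u^2v^2 f_Pf_Q$ in Eq.~\eqref{Esim}. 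The paper gets to Eq.~\eqref{Esim} by a different and explicitly approximate path: it rewrites the exponent $rs$ in Eq.~\eqref{mBBPdef} as a product of indicator sums (Eq.~\eqref{BBPdis}), replaces each random count by the deterministic quantity $NCp_k$ resp.\ $MCq_l$ using Eqs.~\eqref{cpkn}--\eqref{cql}, and then applies a \emph{second-order Taylor expansion} $\exp(-x/\sigma_1)\approx 1-x/\sigma_1+x^2/(2\sigma_1^2)$ (Eq.~\eqref{Tysim}). The leading $1$, the factor $MN$ on the first-order term, and the factor $M^2N^2$ on the second-order term in Eq.~\eqref{Esim} are artifacts of exactly these two approximations; they cannot be produced by your exact computation without performing the same truncation and mean-field substitution.

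So to close the gap you must either (i) carry out the paper's expansion explicitly --- expand $\e^{-rs/\sigma_1}$ to second order, identify $r$ and $s$ with the neighbor counts, and approximate the first and second moments of the count product by $MN\,uv$ and $M^2N^2\,u^2v^2$ --- or (ii) show that your exact binomial sum reduces to Eq.~\eqref{Esim}, which it does not (it only agrees up to the second-order Taylor remainder and the error incurred by replacing binomial moments with powers of their means). As written, the assertion that the order-statistics sum ``yields the claimed representation'' is the missing step, and it is precisely where the substantive (approximate) content of the lemma lives. Your observation about identifying $F_Q(p^+)-F_Q(p^-)$ with the measure of a metric ball is correct but minor: the paper explicitly restricts to $d(\mathbf{p},\mathbf{q})=|\mathbf{p}-\mathbf{q}|$ in Eq.~\eqref{cpkn}, so the scalar-interval reading is the intended one.
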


\begin{lemma}\label{proDsim}
Given $\mathbb{E}_{\text{MBS}}(\mathcal{P},\mathcal{Q})$ obtained in Lemma \ref{proEsim},
the variance $\mathbb{V}_{\text{MBS}}(\mathcal{P},\mathcal{Q})$ can be computed by Eq.~\eqref{propoD}.
\end{lemma}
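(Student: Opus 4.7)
The plan is to reduce Lemma~\ref{proDsim} to a straightforward variance decomposition that reuses, almost verbatim, the integration machinery already developed for Lemma~\ref{proEsim}. I would start from the identity
\begin{equation*}
\mathbb{V}_{\text{MBS}}(\mathcal{P},\mathcal{Q}) = \mathbb{E}_{\text{MBS}^2}(\mathcal{P},\mathcal{Q}) - \mathbb{E}_{\text{MBS}}^2(\mathcal{P},\mathcal{Q}),
\end{equation*}
so the only quantity that actually needs new work is the second moment $\mathbb{E}_{\text{MBS}^2}$; the squared first moment is obtained by squaring the closed form established in Lemma~\ref{proEsim}.

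To obtain $\mathbb{E}_{\text{MBS}^2}$, I would rewrite the integrand using the definition of \text{MBP} in Eq.~\eqref{mBBPdef}. The key observation is that on the event where $\mathbf{q}_j=\text{NN}_r(\mathbf{p}_i,\mathcal{Q})$ and $\mathbf{p}_i=\text{NN}_s(\mathbf{q}_j,\mathcal{P})$, one has $\text{MBP}(\mathbf{p}_i,\mathbf{q}_j)^2 = \mathrm{e}^{-2rs/\sigma_1}$, and off this event the squared value is still zero. Hence the integral expression for $\mathbb{E}_{\text{MBS}^2}$ is structurally identical to the one for $\mathbb{E}_{\text{MBS}}$ in Eq.~\eqref{EBBP}, with $\sigma_1$ replaced by $\sigma_1/2$. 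Consequently I can invoke the derivation behind Lemma~\ref{proEsim} as a black box: the same CDF-based manipulation (introducing $p^{\pm}$ and $q^{\pm}$ to encode the NN ranks, and exploiting the independence of patches across $\mathcal{P}$ and $\mathcal{Q}$) carries through unchanged, with every occurrence of $\mathrm{e}^{-rs/\sigma_1}$ becoming $\mathrm{e}^{-2rs/\sigma_1}$. Subtracting the square of the Lemma~\ref{proEsim} integral then yields the claimed formula \eqref{propoD}.

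The main obstacle I expect is bookkeeping for the cross terms in $\mathrm{MBS}^{2} = \bigl(\sum_{i,j}\mathrm{MBP}(\mathbf{p}_i,\mathbf{q}_j)\bigr)^{2}$: expanding the square produces both ``diagonal'' contributions $\mathrm{MBP}(\mathbf{p}_i,\mathbf{q}_j)^{2}$ and ``off-diagonal'' ones $\mathrm{MBP}(\mathbf{p}_i,\mathbf{q}_j)\,\mathrm{MBP}(\mathbf{p}_{i'},\mathbf{q}_{j'})$ with $(i,j)\neq (i',j')$. Under the independent-patch assumption already used in Lemma~\ref{proEsim}, the off-diagonal terms factor into a product of two copies of $\mathbb{E}_{\text{MBS}}$-type integrals and are thus absorbed into $\mathbb{E}_{\text{MBS}}^{2}$, while the diagonal terms produce the $\sigma_1\!\to\!\sigma_1/2$ replacement described above; writing this expansion cleanly is the only delicate point, since one must also respect the joint NN-rank constraints when two different pairs share a patch.

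Finally, I would assemble the two pieces: substitute the Lemma~\ref{proEsim} expression (and its square) and the $\sigma_1/2$ variant into $\mathbb{E}_{\text{MBS}^2}-\mathbb{E}_{\text{MBS}}^{2}$, and simplify. No further analytic tools are needed beyond Fubini and the CDF rewriting already established, so the computation should be essentially mechanical once the second-moment integral has been set up correctly.
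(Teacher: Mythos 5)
Your proposal matches the paper's proof essentially step for step: the paper likewise writes $\mathbb{V}_{\text{MBS}}=\mathbb{E}_{\text{MBS}^2}-\mathbb{E}_{\text{MBS}}^2$, obtains $\mathbb{E}_{\text{MBS}^2}$ by noting that squaring $\text{MBP}$ merely doubles the exponent (i.e., replaces $\sigma_1$ by $\sigma_1/2$ throughout the Lemma~\ref{proEsim} machinery, including its second-order Taylor truncation), squares the Lemma~\ref{proEsim} expression to the same order, and subtracts to reach Eq.~\eqref{propoD}. The cross-term bookkeeping you flag as the main obstacle does not actually arise, because Definition~\ref{definite} defines $\mathbb{E}_{\text{MBS}}$ (and hence $\mathbb{E}_{\text{MBS}^2}$) as a per-pair moment of $\text{MBP}$ rather than a moment of the double sum in Eq.~\eqref{MBS}.
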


\begin{lemma}\label{Lemmare}
The relationship between $\mathbb{E}_{\text{MBS}^2}(\mathcal{P},\mathcal{Q})$ and $\mathbb{E}_{\text{MBS}}(\mathcal{P},\mathcal{Q})$ satisfies:
\begin{equation}\label{relationEsim}
 \mathbb{E}_{\text{MBS}^2}(\mathcal{P},\mathcal{Q}) > \mathbb{E}_{\text{MBS}}(\mathcal{P},\mathcal{Q})\,.
\end{equation}
\end{lemma}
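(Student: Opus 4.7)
The plan is to reduce the statement to a standard second-moment inequality and then exploit the fact that, once the normalization factor $1/\min\{M,N\}$ is dropped (as the paragraph preceding the lemma explicitly does), $\mathbb{E}_{\text{MBS}}$ aggregates many pairwise contributions and therefore typically exceeds one.

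First, I would apply Lemma~\ref{proDsim} to rewrite $\mathbb{E}_{\text{MBS}^2}(\mathcal{P},\mathcal{Q}) = \mathbb{V}_{\text{MBS}}(\mathcal{P},\mathcal{Q}) + \mathbb{E}^2_{\text{MBS}}(\mathcal{P},\mathcal{Q})$. The target inequality then becomes
\[\mathbb{V}_{\text{MBS}}(\mathcal{P},\mathcal{Q}) > \mathbb{E}_{\text{MBS}}(\mathcal{P},\mathcal{Q})\bigl[1 - \mathbb{E}_{\text{MBS}}(\mathcal{P},\mathcal{Q})\bigr].\]
Whenever $\mathbb{E}_{\text{MBS}} > 1$ the right-hand side is negative while the variance is non-negative, so the inequality holds trivially. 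Equivalently, Cauchy--Schwarz gives $\mathbb{E}_{\text{MBS}^2} \geq \mathbb{E}^2_{\text{MBS}}$; once $\mathbb{E}_{\text{MBS}} > 1$ we also have $\mathbb{E}^2_{\text{MBS}} > \mathbb{E}_{\text{MBS}}$, chaining to yield the claim.

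Second, it therefore suffices to verify $\mathbb{E}_{\text{MBS}}(\mathcal{P},\mathcal{Q}) > 1$. Since the un-normalized MBS is the sum $\sum_{i,j}\text{MBP}(\mathbf{p}_i,\mathbf{q}_j)$ of non-negative terms, I would expand $\mathbb{E}_{\text{MBS}} = \sum_{i,j}\mathbb{E}[\text{MBP}(\mathbf{p}_i,\mathbf{q}_j)]$ and substitute the closed-form integrand of Lemma~\ref{proEsim} (with its CDF factors $F_P$ and $F_Q$ evaluated at the shifts $p^\pm$, $q^\pm$). For realistic patch counts---tracking regions tiled by $3\times 3$ patches give $M,N$ on the order of tens---the aggregated integral visibly exceeds unity, closing the proof.

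The main obstacle will be making the lower bound $\mathbb{E}_{\text{MBS}} > 1$ airtight without over-counting. A naive estimate $MN\cdot\mathbb{E}[\text{MBP}(\mathbf{p},\mathbf{q})]$ treats pairs as independent, whereas the reciprocal nearest neighbor ranks $r,s$ are correlated across distinct pairs (each $\mathbf{p}_i$ can be only one patch's $s$-th reciprocal neighbor), so a careful argument has to plug Lemma~\ref{proEsim}'s expression directly into the sum and track these dependencies. Should the bound degenerate for an edge case of very small $M,N$, the fallback is to retain the variance formulation and exploit the additional dispersion contributed by the higher-rank terms ($r,s \geq 2$)---precisely the ingredients that distinguish MBS from BBS---to verify $\mathbb{V}_{\text{MBS}} > \mathbb{E}_{\text{MBS}}(1 - \mathbb{E}_{\text{MBS}})$ even when the mean falls below one.
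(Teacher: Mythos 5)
Your reduction hinges on establishing $\mathbb{E}_{\text{MBS}}(\mathcal{P},\mathcal{Q})>1$, and that premise is false. Definition~\ref{definite} does not define $\mathbb{E}_{\text{MBS}}$ as the expectation of the aggregated sum $\sum_{i,j}\text{MBP}(\mathbf{p}_i,\mathbf{q}_j)$; it is the expectation of $\text{MBP}$ for a \emph{single} pair $\{\mathbf{p}_i,\mathbf{q}_j\}$, integrated against the joint density of all patches (the integration over $\mathrm{d}P\,\mathrm{d}Q$ in Eq.~\eqref{eq:E_bbp2} is there only because the ranks $r,s$ depend on the other patches). Since $\text{MBP}=\mathrm{e}^{-rs/\sigma_1}$ with $r,s\geq 1$, we have $\text{MBP}\leq \mathrm{e}^{-1/\sigma_1}<1$ pointwise, hence $\mathbb{E}_{\text{MBS}}<1$ and your ``trivially negative right-hand side'' branch never activates. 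Your fallback is no better: verifying $\mathbb{V}_{\text{MBS}}>\mathbb{E}_{\text{MBS}}(1-\mathbb{E}_{\text{MBS}})$ is just a restatement of the lemma, and the heuristic ``the aggregated integral visibly exceeds unity for realistic $M,N$'' is not an argument. Worse, no generic moment inequality can close this gap: for a random variable confined to $(0,1)$ one has $X^2<X$ pointwise and therefore $\mathbb{E}[X^2]<\mathbb{E}[X]$, i.e.\ the \emph{exact} quantities would satisfy the reverse inequality. The lemma is a statement about the second-order Taylor-approximated expressions, so any proof must go through those closed forms.

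That is exactly what the paper does. It expands $\text{MBP}^2=\mathrm{e}^{-2x/\sigma_1}$ to second order (Eq.~\eqref{Esim2}), compares with the expansion of $\mathbb{E}_{\text{MBS}}$ from Lemma~\ref{proEsim} (Eq.~\eqref{Tysim}), and finds that the first-order terms cancel identically, leaving
\begin{equation*}
\mathbb{E}_{\text{MBS}^2}-\mathbb{E}_{\text{MBS}}
=\frac{3M^2N^2}{2\sigma_1^2}\iint\limits_{-\infty}^{~~~~\infty}
\big[F_{P}(q^+)-F_{P}(q^-)\big]^{2}\big[F_{Q}(p^+)-F_{Q}(p^-)\big]^{2}
f_{P}(p)f_{Q}(q)\,\mathrm{d}p\,\mathrm{d}q,
\end{equation*}
which is manifestly positive. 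To repair your write-up you would need to abandon the variance decomposition and the $\mathbb{E}_{\text{MBS}}>1$ claim, and instead carry out this direct cancellation using Eqs.~\eqref{Esim} and \eqref{Esim2}.
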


\newcounter{mytempeqncnt}
\begin{figure*}[!t]
\normalsize
\begin{equation}\label{Esim}
\begin{split}
&\mathbb{E}_{\text{MBS}}(\mathcal{P},\mathcal{Q}) = 1-\frac{1}{\sigma_1}MN \iint\limits_{-\infty}^{~~~~\infty} \big[F_{P}(q^+)-F_{P}(q^-)\big]
\big[F_{Q}(p^+)-F_{Q}(p^-)\big]
f_{P}(p)f_{Q}(q)
\mathrm{d}p\mathrm{d}q\\
&~~~~~+\frac{1}{2\sigma_1^2}M^2N^2\iint\limits_{-\infty}^{~~~~\infty}
\big[F_{P}(q^+)-F_{P}(q^-)\big]^2
\big[F_{Q}(p^+)-F_{Q}(p^-)\big]^2
f_{P}(p)f_{Q}(q)\mathrm{d}p\mathrm{d}q\,.
\end{split}
\end{equation}
\begin{equation}\label{propoD}
\begin{split}
&\mathbb{V}_{\text{MBS}}(\mathcal{P},\mathcal{Q}) =\frac{1}{\sigma_1^2}M^2N^2\iint\limits_{-\infty}^{~~~~\infty}
\big[F_{P}(q^+)-F_{P}(q^-)\big]^2\big[F_{Q}(p^+)-F_{Q}(p^-)\big]^2
f_{P}(p)f_{Q}(q)\mathrm{d}p\mathrm{d}q\\
&~~~~~~~
-\frac{1}{\sigma_1^2}M^2N^2 \Bigg\{\iint\limits_{-\infty}^{~~~~\infty} \big[F_{P}(q^+)-F_{P}(q^-)\big]
\big[F_{Q}(p^+)-F_{Q}(p^-)\big]f_{P}(p)f_{Q}(q)
\mathrm{d}p\mathrm{d}q\Bigg\}^2\,.
\end{split}
\end{equation}
\vspace*{-4pt}
\end{figure*}

By introducing above three auxiliary Lemmas, we formally present Theorem \ref{theor} as follows.
\begin{theorem}\label{theor}
The relationship between $\mathbb{V}_{\text{MBS}}(\mathcal{P},\mathcal{Q})$ and $\mathbb{V}_{\text{BBS}}(\mathcal{P},\mathcal{Q})$ satisfies:
\begin{equation}\label{fb}
\mathbb{V}_{\text{MBS}}(\mathcal{P},\mathcal{Q})> \mathbb{V}_{\text{BBS}}(\mathcal{P},\mathcal{Q}), ~if~ \mathbb{E}_{\text{MBS}}(\mathcal{P},\mathcal{Q})=
\!\mathbb{E}_{\text{BBS}}(\mathcal{P},\mathcal{Q}).
\end{equation}
\end{theorem}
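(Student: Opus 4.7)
The plan is to write each variance in the standard form $\mathbb{V}(X)=\mathbb{E}(X^2)-\mathbb{E}(X)^2$, so that
\begin{equation*}
\mathbb{V}_{\text{MBS}}(\mathcal{P},\mathcal{Q}) - \mathbb{V}_{\text{BBS}}(\mathcal{P},\mathcal{Q}) = \Big[\mathbb{E}_{\text{MBS}^2}(\mathcal{P},\mathcal{Q}) - \mathbb{E}_{\text{BBS}^2}(\mathcal{P},\mathcal{Q})\Big] - \Big[\mathbb{E}_{\text{MBS}}^2(\mathcal{P},\mathcal{Q}) - \mathbb{E}_{\text{BBS}}^2(\mathcal{P},\mathcal{Q})\Big].
\end{equation*}
The hypothesis $\mathbb{E}_{\text{MBS}}(\mathcal{P},\mathcal{Q}) = \mathbb{E}_{\text{BBS}}(\mathcal{P},\mathcal{Q})$ immediately kills the second bracket, so the proof reduces to showing that the first bracket is strictly positive.

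First I would exploit the indicator nature of BBP: since $\text{BBP}(\mathbf{p}_i,\mathbf{q}_j)\in\{0,1\}$, we have $\text{BBP}^2 = \text{BBP}$, and hence $\mathbb{E}_{\text{BBS}^2}(\mathcal{P},\mathcal{Q}) = \mathbb{E}_{\text{BBS}}(\mathcal{P},\mathcal{Q})$. Combined with the assumed equality of the means, this collapses $\mathbb{E}_{\text{BBS}^2}(\mathcal{P},\mathcal{Q})$ to $\mathbb{E}_{\text{MBS}}(\mathcal{P},\mathcal{Q})$, giving
\begin{equation*}
\mathbb{V}_{\text{MBS}}(\mathcal{P},\mathcal{Q}) - \mathbb{V}_{\text{BBS}}(\mathcal{P},\mathcal{Q}) = \mathbb{E}_{\text{MBS}^2}(\mathcal{P},\mathcal{Q}) - \mathbb{E}_{\text{MBS}}(\mathcal{P},\mathcal{Q}).
\end{equation*}

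At this point Lemma~\ref{Lemmare} finishes the job directly: it asserts precisely that the right-hand side is strictly positive. Substituting back yields the desired strict inequality $\mathbb{V}_{\text{MBS}}(\mathcal{P},\mathcal{Q}) > \mathbb{V}_{\text{BBS}}(\mathcal{P},\mathcal{Q})$.

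I do not expect any serious obstacle here, because the three supporting lemmas have been arranged in exactly the sequence needed: Lemma~\ref{proEsim} computes the first moment, Lemma~\ref{proDsim} expresses the variance via the first and second moments, and Lemma~\ref{Lemmare} provides the only genuinely nontrivial fact, namely $\mathbb{E}_{\text{MBS}^2} > \mathbb{E}_{\text{MBS}}$. The main point to double-check in writing up the argument is the observation $\text{BBP}^2=\text{BBP}$, which is what allows the BBS variance to be rewritten purely in terms of its mean and thereby be compared with the MBS variance on equal footing.
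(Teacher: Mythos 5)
Your proposal is correct and follows essentially the same route as the paper's own proof: both use the identity $\text{BBP}^2=\text{BBP}$ to get $\mathbb{E}_{\text{BBS}^2}=\mathbb{E}_{\text{BBS}}$, cancel the squared means via the hypothesis, and reduce the claim to $\mathbb{E}_{\text{MBS}^2}>\mathbb{E}_{\text{MBS}}$, which is exactly Lemma~\ref{Lemmare}. No gaps.
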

\begin{proof}
We firstly obtain $\mathbb{V}_{\text{BBS}}(\mathcal{P},\mathcal{Q})$ and then prove that under the condition of $\mathbb{E}_{\text{MBS}}(\mathcal{P},\mathcal{Q})=
\mathbb{E}_{\text{BBS}}(\mathcal{P},\mathcal{Q})$, the variance of $\text{MBS}(\mathcal{P},\mathcal{Q})$ is larger than that of $\text{BBS}(\mathcal{P},\mathcal{Q})$.

Since $\big[{{\text{BBP}}}(\mathbf{p}_i,\mathbf{q}_j)\big]^2
={{\text{BBP}}}(\mathbf{p}_i,\mathbf{q}_j)$ is derived from Eq.~\eqref{BBPdef}, we have $\mathbb{E}_{{\text{BBS}}^2}(\mathcal{P},\mathcal{Q})=
\mathbb{E}_{\text{BBS}}(\mathcal{P},\mathcal{Q})$\footnote{Note that we denote $\mathbb{E}_{\text{BBS}}(\mathcal{P},\mathcal{Q})$ as $\mathbb{E}_{\text{BBS}}$ by dropping ``$(\mathcal{P},\mathcal{Q})$'' for simplicity in the following description.} by Definition~\ref{definite}.
\changed{M1.1}{\link{R1.1}}{Based on this, under the condition of $\mathbb{E}_{\text{MBS}}(\mathcal{P},\mathcal{Q})=
\mathbb{E}_{\text{BBS}}(\mathcal{P},\mathcal{Q})$, we have:
\begin{equation}\label{req}
\begin{split}
&\mathbb{V}_{\text{MBS}}\!-\!
\mathbb{V}_{\text{BBS}}
=\mathbb{E}_{\text{MBS}^2}-
[\mathbb{E}_{\text{MBS}}]^2
-\mathbb{E}_{\text{BBS}^2}
+[\mathbb{E}_{\text{BBS}}]^2\\
&~~~~~~~~~~~~~~~=\mathbb{E}_{\text{MBS}^2}-\mathbb{E}_{\text{BBS}^2} ~ (\text{Using}~ \mathbb{E}_{\text{MBS}}=
\mathbb{E}_{\text{BBS}}) \\
&~~~~~~~~~~~~~~~=\mathbb{E}_{\text{MBS}^2}-\mathbb{E}_{\text{BBS}} ~ (\text{Using}~\mathbb{E}_{{\text{BBS}}^2}=\mathbb{E}_{\text{BBS}})\\
&~~~~~~~~~~~~~~~=\mathbb{E}_{\text{MBS}^2}-\mathbb{E}_{\text{MBS}} >0\,.
\end{split}
\end{equation}
Finally, we obtain $\mathbb{V}_{\text{MBS}}(\mathcal{P},\mathcal{Q}) > \mathbb{V}_{\text{BBS}}(\mathcal{P},\mathcal{Q})$ as claimed, thereby completing the proof.
}
\end{proof}

Theorem~\ref{theor} theoretically demonstrates that under the condition of $\mathbb{E}_{\text{MBS}}(\mathcal{P},\mathcal{Q})=
\mathbb{E}_{\text{BBS}}(\mathcal{P},\mathcal{Q})$, the variance of $\text{MBS}(\mathcal{P},\mathcal{Q})$ is larger than that of $\text{BBS}(\mathcal{P},\mathcal{Q})$, which indicates that MBS is able to produce more disperse similarity scores over $\mathcal{P}$ and $\mathcal{Q}$ than BBS when they have the same mean value of similarity scores.
Therefore, MBS equipped with the scheme of multiple reciprocal nearest neighbors is more discriminative than BBS \lchanged{M1.5.16}{\link{R1.5}}{for distinguishing} numerous candidate regions when they are extremely similar, which can be illustrated in Fig.~\ref{rankingfig}.
It can be observed that the similarity score curve of BBS (see blue curve) within No.1$\sim$No.57, No.58$\sim$No.89, and No.90$\sim$No.100 candidate regions is almost flat, which \lchanged{M1.5.17}{\link{R1.5}}{means} that BBS cannot tell a difference on these ranges.
In contrast, the similarity scores decided by MBS (see red curve) on all candidate regions are totally different and thus are discriminative.
By such scheme, more image patches are involved in computing the similarity scores and they yield different responses to candidate regions.
Accordingly, MBS can distinguish numerous candidate regions when they are extremely similar, and thus the discriminative ability of our tracker can be effectively improved.

\begin{figure}
\begin{center}
\includegraphics[width=0.44\textwidth]{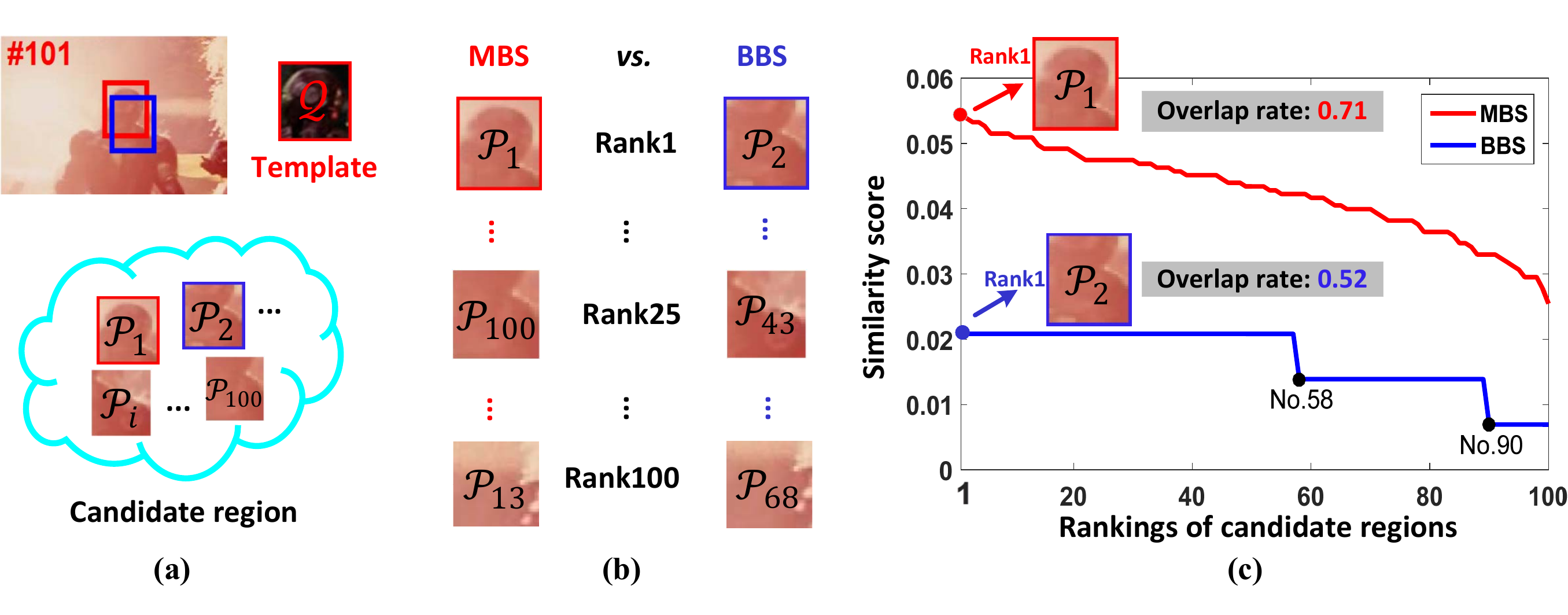}
\caption{\footnotesize Illustration of the superiority of MBS to BBS.
We aim to decide which of the 100 candidate regions ( from $\mathcal{P}_1$ to $\mathcal{P}_{100}$) in (a) mostly matches the target template $\mathcal{Q}$.
The overlap rate between the decided target and groundtruth region is particularly observed.
We see that the candidate region $\mathcal{P}_1$ with the highest similarity score is selected by MBS; while the inferior result $\mathcal{P}_2$ is picked up by BBS.
Consequently, $\mathcal{P}_1$ computed by the MBS achieves higher overlap rate (71\%) than $\mathcal{P}_2$ (52\%) that is obtained by the BBS.
Specifically, in (c), after ranking these candidates according to their similarity scores measured by MBS, we plot their rankings on the horizontal axis and the corresponding similarity scores computed by MBS (red curve) and BBS (blue curve) on the vertical axis.
}
\label{rankingfig}
\end{center}
\end{figure}

\subsection{Memory Filtering for Template Updating}
\label{sec:al}
Here we investigate the template updating scheme in our TM$^3$ tracker by designing two types of templates Tmpl$_r$ and Tmpl$_e$.
For Tmpl$_e$, the tracking result in the current frame is directly taken as the template Tmpl$_e$ if its similarity score is larger than a pre-defined threshold $0.5$.
Note that low threshold would incur in some unreliable results and thus degrade the tracking accuracy; a large one makes the template Tmpl$_e$ difficult to be frequently updated.
In other words, Tmpl$_e$ is frequently updated to capture the target appearance change in a short period without error accumulation.
In contrast, the template Tmpl$_r$ focuses on the tracking results in history, and it is updated via the memory filtering strategy to ``recall" some stable results and ``forget" some results under abnormal situations.

\subsubsection{Formulation of memory filtering}
Suppose the tracked target in every frame is characterized by a $d$-dimensional feature vector, then the tracking results in the latest $N_s$ frames can be arranged as a data matrix $\mathbf{X} \in \mathbb{R}^{N_s\times d}$ where each row represents a tracking result.
Similar to sparse dictionary selection \cite{Krause2010Submodular_mini}, memory filtering aims to find a compact subset from $N_s$ tracking results so that they can well represent the entire $N_s$ results.
To this end, we define a selection matrix $\mathbf{S} \in \mathbb{R}^{N_s\times N_s}$ of which $s_{ij}$ reflects the expressive power of the $i$th tracking result $\mathbf{x}_i$ \changed{M1.5.18}{\link{R1.5}}{on} the $j$th result $\mathbf{x}_j$, so the norm of the $\mathbf{S}$'s $i$th row suggests \changed{M1.5.18}{\link{R1.5}}{the qualification of} $\mathbf{x}_i$ is to represent the whole $N_s$ results.
The selection process is fulfilled by solving:
\begin{equation}\label{mainss}
  \mathop{\mathrm{min}}\limits_{\mathbf{S}}  \underbrace{\frac{1}{2}\| \mathbf{X}-\mathbf{X}\mathbf{S}\|^2_\mathrm{F} + \delta \mathrm{Tr}(\mathbf{S}^{\top}\mathbf{L}\mathbf{S})}_{\triangleq f(\mathbf{S})} +  \underbrace{\beta \sum_{i=1}^{N_s} \frac{1}{h_i+\varepsilon} \|\mathbf{S}\|_{1,2}}_{\triangleq g(\mathbf{S})},
\end{equation}
where \changed{M2.4}{\link{R2.4}}{$\varepsilon$ is a small positive constant to avoid being divided by zero}, and  $\|\mathbf{S}\|_{1,2}=\sum_{i=1}^{N_s}\|\mathbf{S}_{i,.}\|_2$ denotes the sum of $\ell_2$ norm of all $N_s$ rows.
\changed{M1.5.19}{\link{R1.5}}{The second term in Eq.~(\ref{mainss}) is the smoothness graph regularization with the weighting parameter $\delta$.
Within this term, the similar tracking results will have a similar probability to be selected.}
Herein, the Laplacian matrix is defined by $\mathbf{L} = \mathbf{D} - \mathbf{W}$, where $\mathbf{D}$ is a diagonal matrix with $\mathbf{D}_{ii}=\sum_j\mathbf{W}_{ij}$ and $\mathbf{W}$ is the weight matrix defined by the reciprocal nearest neighbors scheme, namely:
\begin{equation*}\label{knn}
 \mathbf{W}_{ij}\!=\mathrm{e}^{-\frac{rs}{\sigma_2}}~~ \text{if} ~~\mathbf{x}_i \!\!=\! \mathcal{N}_r(\mathbf{x}_j,\!\mathbf{X})  \wedge
 \mathbf{x}_j \!\!=\! \!\mathcal{N}_s(\mathbf{x}_i,\!\mathbf{X})\,,
\end{equation*}
where $\sigma_2=2$ is the kernel width.
The regularization parameter $\beta$ in Eq.~(\ref{mainss}) governs the trade-off between the reconstruction error $f(\mathbf{S})$ and the group sparsity $g(\mathbf{S})$ with respect to the selection matrix.
Specifically, by introducing the similarity score $h_i=\text{MBS}(\mathbf{x}_i,\text{Tmpl}_r)$ to Eq.~(\ref{mainss}), the selection matrix $\mathbf{S}$ is weighted by the similarity scores to faithfully represent the ``reliable" degrees of the corresponding tracking results.

\subsubsection{Optimization for memory filtering}
The objective function in Eq.~(\ref{mainss}) can be decomposed into a differentiable convex function $f(\mathbf{S})$ with \lchanged{M2.4}{\link{R2.4}}{a} Lipschitz continuous gradient and a non-smooth but convex function $g(\mathbf{S})$, so the accelerated proximal gradient (APG) \cite{Parikh2013Proximal_minis} algorithm can be used for efficiently solving this problem with the convergence rate of $\mathcal{O}(\frac{1}{T^2})$ ($T$ is the number of iterations).
Therefore, we need to solve the following optimization problem:
\begin{equation}\label{APG}
 \mathbf{Z}^{(t+1)}=\mathop{\mathrm{argmin}}\limits_{\mathbf{S}} \frac{1}{2}\|\mathbf{S}-\mathbf{Z}^{(t)}\|_\mathrm{F}^2+\frac{1}{p_L}g(\mathbf{S})\,,
\end{equation}
where the auxiliary variable $\mathbf{Z}=\mathbf{S}-\frac{1}{p_L}\nabla f(\mathbf{S})$,
and $p_L$ is the smallest feasible Lipschitz constant, which equals to:
\begin{equation}\label{Lips}
  p_L = \phi(\mathbf{X}^{\top}\mathbf{X}+\delta(\mathbf{L}+\mathbf{L}^{\top}))\,,
\end{equation}
where $\phi(\cdot)$ is the spectral radius of the corresponding matrix.
The gradient $\nabla f(\mathbf{S})$ is obtained by:
\begin{equation}\label{gradf}
 \nabla f(\mathbf{S})=-\mathbf{X}\mathbf{X}^{\top}+\mathbf{X}^{\top}\mathbf{X}\mathbf{S}
 +\delta (\mathbf{L}+\mathbf{L}^{\top})\mathbf{S}\,.
\end{equation}

Notice that the objective function in Eq.~(\ref{APG}) is separable regarding the rows of $\mathbf{S}$, thus we decompose Eq.~(\ref{mainss}) into \lchanged{M2.4}{\link{R2.4}}{an $N_s$ of group lasso sub-problems that can be effectively solved by a soft-thresholding operator}, which is:
\begin{equation}\label{soft}
  \mathbf{S}_{i,\cdot}\! \!=\!\!\mathbf{Z}_{i,\cdot}\mathop{\mathrm{max}}\! \bigg\{\!
  1-\frac{\frac{\beta}{p_L(h_i+C)}}{\|\mathbf{Z}_{i,\cdot}\|_2},0\bigg\},i=1,2,\cdots \!,\!N_s.
\end{equation}
Finally, the algorithm for the memory filtering strategy is summarized in Algorithm \ref{ago1}.

\begin{algorithm}[t]\label{ago1}
\begin{small}
\caption{Algorithm for memory filtering strategy}
\KwIn{data matrix $\mathbf{X} \in \mathbb{R}^{N_s\times d}$ with their corresponding similarity scores $\{ h_i \}_{i=1}^{N_s}$, two related regularization parameters: $\beta$, $\delta$.}
\KwOut{\changed{M2.4.4}{\link{R2.4}}{the selected representative result $\mathbf{x}_i$ with the largest value in $\| \mathbf{S}_{i,\cdot}\|_2$.}}
Set:  stopping error $\varepsilon$.\\
Obtain the Lipschitz constant $p_L$ by Eq.~(\ref{Lips}).\\
Initialize $t=0$ and $l^{(0)}=1$, $\mathbf{S}^{(0)}=\mathbf{0}$ and two auxiliary matrices $\mathbf{U}_1^{(0)}=\mathbf{U}_2^{(0)}=\mathbf{0}$.\\
\SetKwRepeat{RepeatUntil}{Repeat}{Until}
\RepeatUntil{$\frac{\| \mathbf{S}^{(t+1)} - \mathbf{S}^{(t)}\|_{1,2}} { \| \mathbf{S}^{i} \|_{1,2}} \leq \varepsilon $}
{$\mathbf{Z}^{(t+1)}:=\mathbf{U}_1^{(t)}-\frac{1}{p_L}\nabla f(\mathbf{U}_1^{(t)})$ by Eq.~(\ref{gradf})\;
$\mathbf{U}_2^{(t+1)}:=\mathbf{S}^{(t)}$ and $\mathbf{S}_i^{(t+1)}$ is obtained by Eq.~(\ref{soft}) for $i=1,2,\cdots,N_s$\;
$\tau:=l^{(t)}-1$, and $l^{(t+1)}:=\frac{1+\sqrt{1+(l^{(t)})^2}}{2}$\;
$\mathbf{U}_1^{(t+1)}:=\mathbf{S}^{(t+1)}
+\frac{\tau(\mathbf{S}^{(t+1)}-\mathbf{U}_2^{(t+1)})}{t}$\;
$t := t + 1$\;
}
Output $\mathbf{x}_i$ with the largest value in $\| \mathbf{S}_{i,\cdot}\|_2$.
\end{small}
\end{algorithm}

\subsubsection{Illustration of memory filtering for Tmpl$_r$}
In our tracker, the tracking results of the latest ten frames ($N_s=10$) are preserved to construct the matrix $\mathbf{X}$, and the $i$th ($i=1,2,\cdots,N_s$) tracking result $\mathbf{T}_i$ with the largest value $\| \mathbf{S}_{i,\cdot}\|_2$ is added into the target dictionary.
\lchanged{M1.5.20}{\link{R1.5}}{Furthermore, to save storage space and reduce computational cost, the ``First-in and First-out" procedure is employed to maintain the number of atoms $N_D$ in the target dictionary $\mathcal{D} \in \mathbb{R}^{d \times N_D}$.
That is, the latest representative tracking result is added, and meanwhile the oldest tracking result is thrown away.}

Here, similar to \cite{liu2017NMC_mini}, we detail how the template Tmpl$_r$ is represented by such \lchanged{M2.4}{\link{R2.4}}{a} carefully constructed dictionary.
\begin{figure}
\begin{center}
\includegraphics[width=0.44\textwidth]{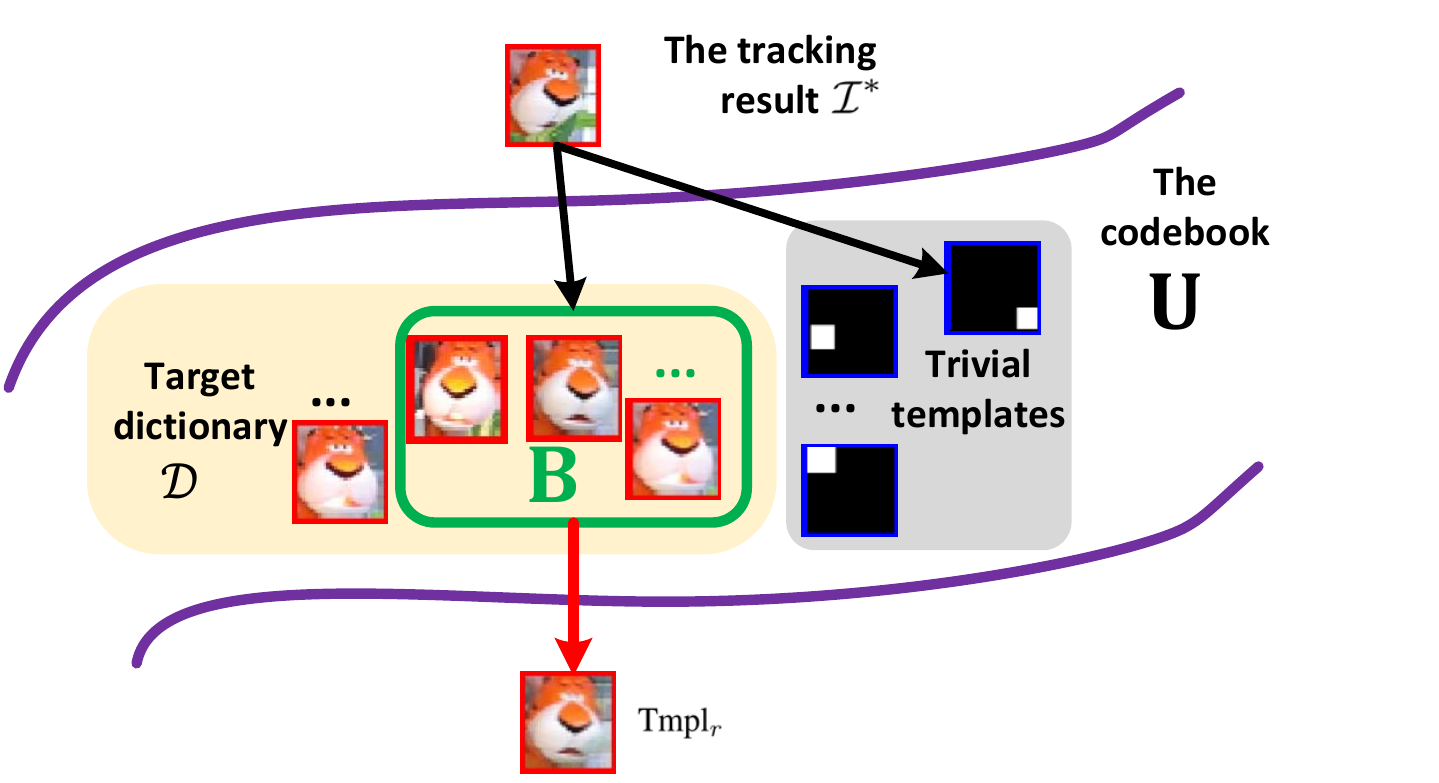}
\caption{\footnotesize Illustration of how the template Tmpl$_r$ is represented by the target dictionary $\mathcal{D}$. The tracking results are represented by its $k$ nearest neighbors from the target dictionary $\mathcal{D}$ and a certain amount of trivial templates.
Such selected target templates render the construction of the template Tmpl$_r$.}
\label{LLC}
\end{center}
\end{figure}
As shown in \lchanged{M1.5.21}{\link{R1.5}}{Fig.~\ref{LLC}}, in our method, we construct a codebook $\mathbf{U}=\mathcal{D} \cup \mathbf{I}$, where $\mathbf{I}$ represents a set of trivial templates\footnote{Each trivial template is formulated as a vector with only one nonzero element.}.
Subsequently, we select $k$ ($k=5$ in our experiment) nearest neighbors of the tracking result $\mathcal{I}^*$ from the codebook $\mathbf{U}$, to form the dictionary $\mathbf{B}$.
Finally, the template Tmpl$_r$ is reconstructed by a linear combination of atoms in the dictionary $\mathbf{B}$.
By doing so, the appearance model can effectively avoid being contaminated when the tracking result $\mathcal{I}^*$ is slightly occluded.
Fig.~\ref{LLC} demonstrates that the template Tmpl$_r$ is much more accurate than $\mathcal{I}^*$ because the leaf in front of the target is removed in the template Tmpl$_r$.

To show the effectiveness of our memory filtering strategy, a qualitative result is shown in \changed{M1.5.23}{\link{R1.5}}{Fig.~\ref{key}}.
\begin{figure}
\begin{center}
\includegraphics[width=0.45\textwidth]{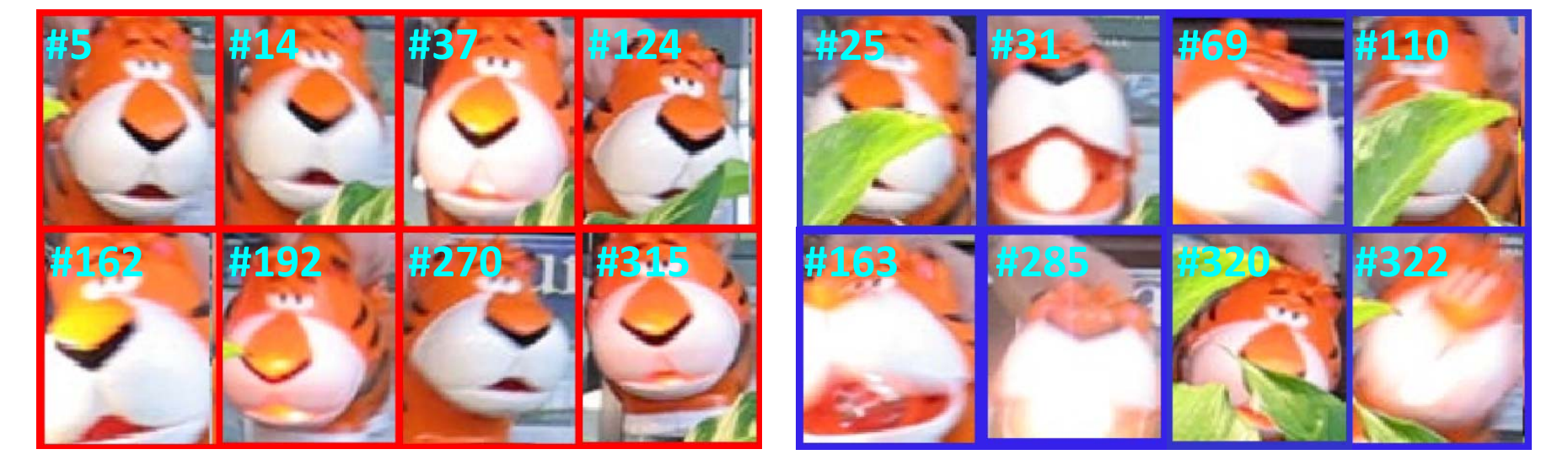}
\caption{\footnotesize Examples of some selected (red boxes) and discarded (blue boxes) historical tracking results by our memory filtering strategy.  We see that the selected results represent the general appearance of the target, so they are reliable and should be ``recalled". The results under abnormal situations (\emph{e.g.} occlusion, incompleteness, and undesirable observed angle) are filtered out and ``forgotten" by the memory filtering strategy.}
\label{key}
\end{center}
\end{figure}
One can see that the memory filtering strategy selects some representative and reliable results (in red), which depict the target appearance in normal conditions, so they can precisely represent the target in general cases.
\lchanged{M1.5.22}{\link{R1.5}}{Comparably, some tracking results under drastic appearance changes, severe occlusions and dramatic illumination variations are not incorporated to the template Tmpl$_r$.
This is because these results just temporarily present the abnormal situations of the target, \emph{i.e.}, far away from the target's general appearances.}
Note that these \lchanged{M1.5.24}{\link{R1.5}}{dramatic} appearance variations in a short period can be captured by the template Tmpl$_e$.
Therefore, the combination of \lchanged{M1.5.25}{\link{R1.5}}{two such types of templates effectively decreases the risk of tracking drifts, so that the appearance of interesting} target can be comprehensively understood by our TM$^3$ tracker.

Finally, our TM$^3$ tracker is summarized in Algorithm \ref{ago2}.

\section{Implementation Details}
\label{sec:impl}
In this section, more implementation details of our method will be discussed.

\subsection{Geometry Constraint}
\label{sec:screen}
The \emph{fast candidate selection} step aims at designing a fast algorithm to throw away some definitive non-target regions  to balance between running speed and accuracy.
The obtained region $\mathcal{I}_r^*$ at the $t$th frame in \texttt{Flow-r} process helps to remove numerous definitive non-target regions in \texttt{Flow-e} process.
To this end, we introduce a distance measure $dist$ \cite{Bailer2014A_mini} between the $j$th target proposal $\mathcal{I}(\mathbf{c}_e^j,s_e^j)$ ($\mathcal{I}_e^j$ for simplicity) at the $t$th frame and $\mathcal{I}(\mathbf{c}_r^*,s_r^*)$, \lchanged{M1.5.27}{\link{R1.5}}{that is}:
\begin{equation}\label{dist}
\begin{split}
&  dist([\mathbf{c}_e^j,s_e^j],[\mathbf{c}_r^*,s_r^*])= \\
&\| \frac{c_r^{*x}-c_e^{jx}}{w(s_r^*+s_e^j)}, \frac{c_r^{*y}-c_e^{jy}}{h(s_r^*+s_e^j)},
  \tau \frac{s_r^*-s_e^j}{s_r^*+s_e^j} \|_2,
\end{split}
\end{equation}
\lchanged{M2.4.3}{\link{R2.4}}{where $\tau=5$ is a parameter determining the influence of scale to the value of $dist$, and $w$, $h$ are the width and height of the final tracking result at the $(t-1)$th frame, respectively.}
A small $dist$ value indicates that the corresponding image region $\mathcal{I}_e^j$ is very similar to the tracking cue $\mathcal{I}_r^*$ with a high probability.
Following the definition of such distance, in our method the top $N'_e$ target proposals with the smallest $dist$ value are retained, so that they can be used in the following step in \texttt{Flow-e} process.
Therein, two tracking cues including $\mathcal{I}_e^{dist}$ with the smallest $dist$, and the $\mathcal{I}_e^*$ with the highest similarity score are picked up for the fusion step.

\subsection{Fusion of Multiple Tracking Cues}
\label{sec:fuse}
Three tracking cues $\mathcal{I}_r^*$, $\mathcal{I}_e^*$ and $\mathcal{I}_e^{dist}$ are obtained by two main flows as shown in Fig.~\ref{framework}.
Here we fuse the above results to the final tracking result based on a confidence level $F$.
For example, the confidence level of $\mathcal{I}_r^*$ is defined as:
\begin{equation}\label{fusion}
\begin{split}
&F(\mathcal{I}_r^*)\!=\!\text{MBS}(\mathcal{I}_r^*,\text{Tmpl}_r)
+\!\text{MBS}(\mathcal{I}_r^*,\text{Tmpl}_e)\\
&+\text{VOR}([\mathbf{c}_r^*,s_r^*],[\mathbf{c}_e^{dist},s_e^{dist}])
+\text{VOR}([\mathbf{c}_r^*,s_r^*],[\mathbf{c}_e^*,s_e^*]),
\end{split}
\end{equation}
where the Pascal VOC Overlap Ratio (VOR) \cite{VOC_mini} measures the overlap rate between the two bounding boxes $A$ and $B$, namely $\text{VOR}(A,B)=\frac{area(A \cap B)}{area(A \cup B)}$.
The first two terms in Eq.~(\ref{fusion}) reflect the appearance similarity degree and the last two terms consider the spatial relationship of the two cues.
The confidence levels for $F(\mathcal{I}_e^*)$ and $F(\mathcal{I}_e^{dist})$ can be calculated in the similar way.
Finally, the tracking cue with the highest confidence level is chosen as the final tracking result $\mathcal{I}^*$.

\begin{algorithm}
\label{ago2}
\begin{small}
\caption{The proposed TM$^3$ tracking algorithm}
\KwIn{Initial target bounding box $\mathbf{o}_1=(x_1, y_1, s_1)$.}
\KwOut{Estimated object state $\mathbf{o}_t^*=(\hat{x}_t, \hat{y}_t, \hat{s}_t)$.}
\SetKwRepeat{RepeatUntil}{Repeat}{Until}
\RepeatUntil{End of video sequence}
{Generate candidate target regions $\mathcal{R}_t \bigcup \mathcal{E}_t$ \;
\tcp*[h]{Flow-r process}\\
Obtain potential candidate regions  $\mathcal{R}'_t$\;
MBS: Find the optimal region $\mathcal{I}_r^*$ from $\mathcal{R}'_t$\;
\tcp*[h]{Flow-e process}\\
Obtain potential candidate regions $\mathcal{E}'_t$ by Eq.~(\ref{dist})\;
MBS: Obtain $\mathcal{I}_e^*$ and $\mathcal{I}_e^{dist}$ from $\mathcal{E}'_t$\;
Output $\mathbf{o}_t^*$ and $\mathcal{I}^*$ by the fusion step in Eq.~(\ref{fusion})\;
\tcp*[h]{Update $\text{Tmpl}_e$}\\
\lIf{$\text{MBS}(\mathcal{I}^*,\text{Tmpl}_e)>0.5$}{$\text{Tmpl}_e \leftarrow \mathcal{I}^*$\;}
\tcp*[h]{Select the representative result}\\
\lIf{$t$ $\mathrm{mod}$ $10 = 0$}{Obtain $\mathbf{T}_i$ by Algorithm \ref{ago1}\;}
\tcp*[h]{Update $\mathcal{D}$ and $\text{Tmpl}_r$}\\
\lIf{$\text{Num}(\mathcal{D})=N_D$}{Obtain Tmpl$_r$ by $\mathcal{D}$ \;}
\ElseIf{$\text{Num}(\mathcal{D})<N_D$}{$\mathcal{D} = \mathcal{D} \cup \mathbf{T}_i$, $\text{Num}(\mathcal{D}):=\text{Num}(\mathcal{D})+1$\;}
\lElse{``First-in and First-out" procedure for $\mathcal{D}$\;}

}
\end{small}
\end{algorithm}

\subsection{Feature Descriptors}
We \changed{M2.4}{\link{R2.4}}{experimented with two appearance descriptors consisting of} color feature and deep feature to represent the target regions and the templates.

\noindent {\bf Color features:}
For a colored video sequence, all target regions and the templates are normalized into $36 \times 36 \times 3$ in CIE Lab color space.
In \changed{M2.4}{\link{R2.4}}{the} \texttt{Flow-e} process, the EdgeBox approach is executed on RGB color space to generate various target proposals $\mathcal{E}_t$.
In \changed{M1.5.28}{\link{R1.5}}{the two} processes, each image region is split into $3 \times 3$ non-overlapped small patches, \changed{M2.4}{\link{R2.4}}{where each patch is represented by a 27-dimensional ($3 \times 3 \times 3$) feature vector.}

\noindent {\bf Deep features:}
We adopt the Fast R-CNN \cite{Girshick2015Fast_mini} with a pre-trained VGG16 model on ImageNet \cite{Deng2009ImageNet_mini} and PASCAL07 \cite{VOC_mini} for our region-based feature extraction.
In our method, the Fast R-CNN network takes the entire image $\mathcal{I}$ and the potential candidate proposals $\mathcal{R}'_t\cup\mathcal{E}'_t$ as input.
For each candidate proposal, \emph{the region of interest (ROI) pooling layer} in the network architecture is adopted to exact a 4096-dimensional feature vector from the feature map to represent each image region.

\subsection{Computational Complexity of MBS}
\label{sec:ccm}
\changed{M2.1}{\link{R2.1}}{
The computation of MBS can be divided into two steps: first to calculate the similarity matrix, and second to pick up $r$ (or $s$) reciprocal nearest neighbors of each image patch based on the similarity matrix as demonstrated in Eq.~\eqref{mBBPdef}.

Thanks to the reciprocal $k$-NN scheme, the generated similarity matrix is sparse and the nonzero elements in the matrix almost spread along its diagonal direction.
As a result, the average computational complexity of the similarity matrix reduces from $\mathcal{O}(M^2d)$ to $\mathcal{O}(Md)$, where $d$ is the feature dimension.
After that, we pick up $r$ (or $s$) reciprocal nearest neighbors of each image patch in $\mathcal{P}$ based on the similarity matrix.
Due to the exponential decay operator in Eq.~\eqref{mBBPdef}, there is no sense to consider a large $r$ and $s$.
Hence, we just consider the $c\leq 4$ nearest neighbors of an image patch to accelerate the computation in our experiment.
As described in \cite{Dekel_BBS_mini}, such operation can be further reduced to $\mathcal{O}(Mc)$ on the average.
Finally, the overall MBS complexity is $\mathcal{O}(M^2cd)$.
}
\section{Experiments}
\label{sec:experiment}
In this section, we compare the proposed TM$^3$ tracker with other recent algorithms on two benchmarks including OTB \cite{Wu2015_mini} and PTB \cite{song2013tracking_mini}.
Moreover, the results of ablation study and parametric sensitivity are also provided.

\subsection{Experimental Setup}
In our experiment, the proposed TM$^3$ tracker is tested on both color feature (denoted as ``TM$^3$-color") and deep feature (denoted as TM$^3$-deep).
Our TM$^3$-color tracker is implemented in MATLAB on a
PC with Intel i5-6500 CPU (3.20 GHz) and 8 GB memory, and runs about 5 fps (frames per second).
\lchanged{M3.11}{\link{R3.11}}{The proposed TM$^3$-deep tracker is based on MatConvNet toolbox \cite{Vedaldi2015MatConvNet_mini} with Intel Xeon E5-2620 CPU @2.10GHz and a NVIDIA GTX1080 GPU, and runs almost 4 fps.}

\noindent{\bf Parameter settings}
In our TM$^3$-color tracker, every image region is normalized to $36\times 36$ pixels, and then split into a set of non-overlapped $3\times 3$ image patches.
In this case, $M=N=\frac{{36}^2}{3^2}=144$.
In the TM$^3$-deep tracker, each image region is represented by a 4096-dimensional feature vector, namely $M=N=4096$.
 In \texttt{Flow-r} process, to generate target regions in the $t$th frame, we draw $N_r=700$ samples in translation and scale dimension, $\mathbf{x}_i^t = (\mathbf{c}_r^i,s_r^i), i=1,2,\cdots,N_r$, from a Gaussian distribution whose mean is the previous target state $\mathbf{x}_*^{t-1}$ and covariance is a diagonal matrix $\Sigma=\mathrm{diag}(\sigma_x,\sigma_y,\sigma_s)$ of which diagonal elements are standard deviations of the sampling parameter vector $[\sigma_x,\sigma_y,\sigma_s]$ for representing the target state.
In our experiments, the sampling parameter vector is set to $[\sigma_x,\sigma_y,0.15]$ where $\sigma_x=\min\{w/4,15\}$ and $\sigma_y=\min\{h/4,15\}$ are fixed for all test sequences, and $w$, $h$ have been defined in Eq.~\eqref{dist}.
The number of potential proposals $\mathcal{R}'_t$ and $\mathcal{E}'_t$ are set to $N'_r=N'_e=50$.
In \texttt{Flow-e} process, we use the same parameters in EdgeBox as described in \cite{zitnick2014edge_mini}.
The trade-off parameters $\delta$ and $\beta$ in Eq.~(\ref{mainss}) are fixed to $5$ and $10$ accordingly;
The number of atoms in the target dictionary $\mathcal{D}$ is decided as $N_D=12$.

\subsection{Results on OTB}
\subsubsection{Dataset description and evaluation protocols}
\label{ddde}
OTB includes two versions, \emph{i.e.} OTB-2013 and OTB-2015.
OTB-2013 contains 51 sequences with precise bounding-box annotations, and 36 of them are colored sequences.
In OTB-2015, there are 77 colored video sequences among all the 100 sequences.
Specifically, considering that the proposed TM$^3$-color tracker is executed on CIE Lab and RGB color spaces, the TM$^3$-color tracker is only compared with other baseline trackers on the colored sequences to achieve fair comparison.
Differently, our TM$^3$-deep tracker can handle both colored and gray-level sequences, so it is evaluated on all the sequences in the above two benchmarks.

The quantitative analysis on OTB is demonstrated on two evaluation plots in the one-pass evaluation (OPE) protocol: the success plot and the precision plot.
In the success plot, the target in a frame is declared to be successfully tracked if its current overlap rate exceeds a certain threshold.
The success plot shows the percentage of successful frames at the overlap threshold varies from 0 to 1.
In the precision plot,  the tracking result in a frame is considered successful if the center location error (CLE) falls below a pre-defined threshold.
The precision plot shows the ratio of successful frames at the CLE threshold ranging from 0 to 50.
Based on the above two evaluation plots, two ranking metrics are used to evaluate all compared trackers: one is the Area Under the Curve (AUC) metric for the success plot, and the other is the precision score at threshold of 20 pixels for the precision plot.
For details about the OTB protocol, refer to \cite{Wu2015_mini}.

Apart from the totally 29 and 37 trackers included in OTB-2013 and OTB-2015, respectively, we also compare our tracker with several state-of-the-art methods, including ACFN \cite{Choi2017CVPR_mini}, RaF \cite{Zhangle2017CVPR_mini}, TrSSI-TDT \cite{Hu2017Semi_mini}, DLSSVM \cite{Ning2016CVPR_mini}, Staple \cite{Bertinetto2016CVPR_mini}, DST \cite{DSTXiao2016Distractor_mini}, DSST \cite{danelljan2014accurate_mini}, MEEM \cite{zhang2014meem_mini}, TGPR \cite{Gao2014_mini}, KCF \cite{henriques2015high_mini}, IMT \cite{yoon2015interacting_mini}, LNLT \cite{Mabo2015_mini}, DAT \cite{DATPossegger2015In_mini}, and CNT \cite{zhang2016cnt_mini}\footnote{The implementation of several algorithms \emph{i.e.}, TrSSI-TDT and LNLT is not public, and hence we just report their results on OTB provided by the authors for fair comparisons.}.
Specifically, two state-of-the-art template matching based trackers including ELK \cite{Oron2014klt_mini} and BBT \cite{Oron2016Best_mini} are also incorporated for comparison.

\subsubsection{Overall performance}


\begin{figure*}
\begin{center}
\includegraphics[width=0.96\textwidth]{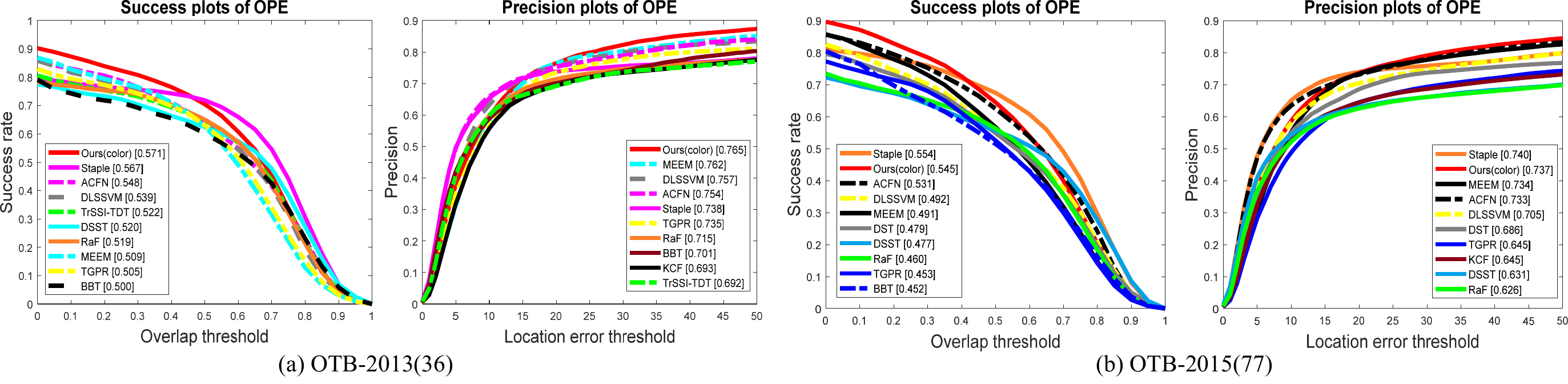}
\caption{\footnotesize Success  and precision plots of our color-based tracker TM$^3$-color and various compared trackers. (a) shows the results on OTB-2013 with 36 colored sequences, and (b) presents the results on OTB-2015 with 77 colored sequences. For clarity, we only show the curves of top 10 trackers in (a) and (b).}
\label{OPEcolor}
\end{center}
\end{figure*}

Fig.~\ref{OPEcolor} shows the performance of all compared trackers on OTB-2013 and OTB-2015 datasets.
On OTB-2013, our TM$^3$ tracker with color feature achieves 57.1\% on average overlap rate, which is higher than the 56.7\% produced by a very competitive correlation filter based algorithm Staple.
On OTB-2015, it can be observed that the performance of all trackers \changed{M1.5.29}{\link{R1.5}}{decreases}.
The proposed TM$^3$-color tracker and Staple still provide the best results with the AUC scores equivalent to 54.5\% and 55.4\%, respectively.
Specifically, on these two benchmarks, we see that the competitive template matching based tracker BBT obtains 50.0\% and 45.2\% on average overlap rate, respectively.
Comparatively, our TM$^3$ tracker significantly improves the performance of BBT with a noticeable margin of 7.1\% and 9.3\% on OTB-2013 and OTB-2015, accordingly.

\begin{figure*}
\begin{center}
\includegraphics[width=0.96\textwidth]{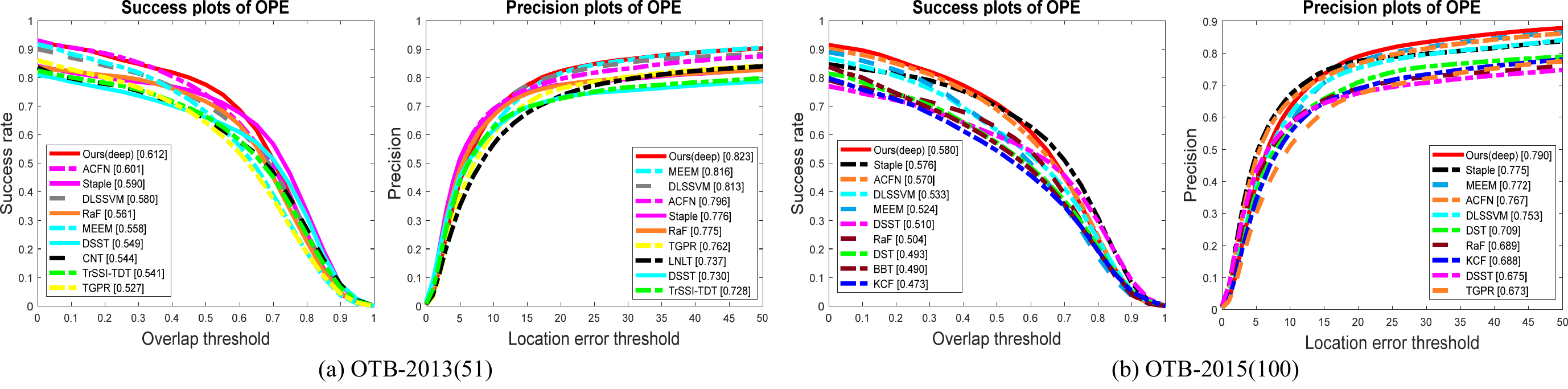}
\caption{\footnotesize Success  and precision plots of our deep feature based tracker TM$^3$-deep and various compared trackers. (a) shows the results on OTB-2013 with 51 sequences, and  (b) presents the results on OTB-2015 containing 100 sequences. For clarity, we only show the curves of top 10 trackers.}
\label{OPEdeep}
\end{center}
\end{figure*}

We also test our tracker with deep feature and the corresponding performance of these trackers are shown in Fig.~\ref{OPEdeep}.
Not surprisingly, TM$^3$-deep tracker boosts the
performance of TM$^3$-color with color feature.
It achieves 61.2\% and 58.0\% success rates on the above two benchmarks, both of which rank first among all compared trackers.
On the precision plots, the proposed TM$^3$-deep tracker yields the precision rates of 82.3\% and 79.0\% on the two benchmarks, respectively.

\lchanged{M3.5}{\link{R3.5}}{The overall plots on the two benchmarks demonstrate that our TM$^3$ (with colored and deep features) tracker comes in first or second place among the trackers with a comparable performance evaluated by the success rate.}
It is able to outperform the trackers such as CNN based trackers, correlation filter based algorithms, template matching based approaches, and other representative methods.
The favorable performance of our TM$^3$ tracker benefits from the fact that
the discriminative similarity metric, the memory filtering strategy, and the rich feature help our TM$^3$ tracker to accurately separate the target from its cluttered background, and effectively capture the target appearance variations.

\subsubsection{Attribute based performance analysis}
\begin{figure*}
\begin{center}
\includegraphics[width=0.96\textwidth]{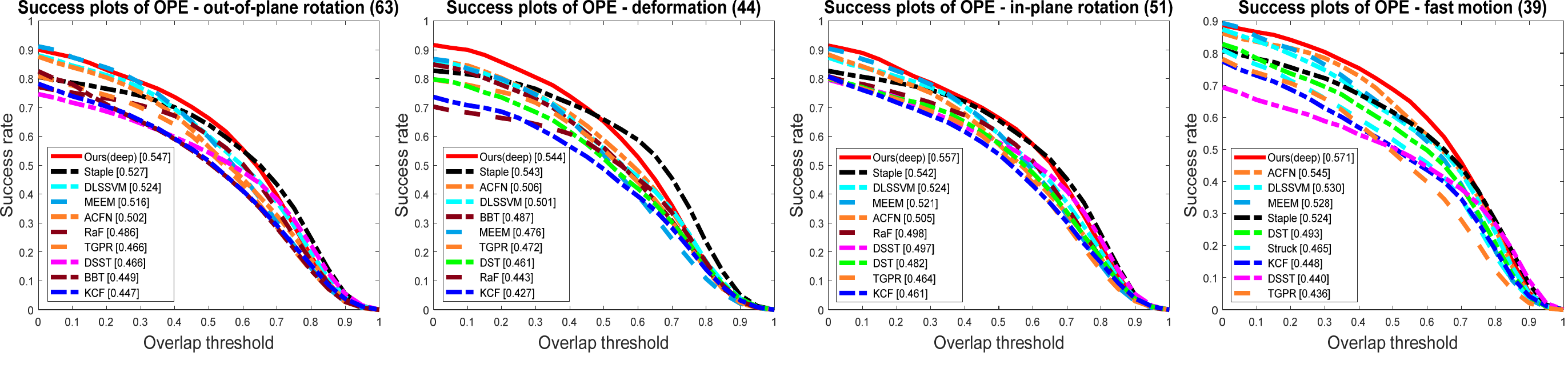}
\caption{\footnotesize Attribute-based analysis of our TM$^3$-deep tracker with four main attribute on OTB-2015(100), respectively. For clarity, we only show the top 10 trackers in the legends. The title of each plot indicates the number of videos labelled with the respective attribute.}
\label{attribute}
\end{center}
\end{figure*}

To analyze the strength and weakness of the proposed algorithm, we provide the attribute based performance analysis to illustrate the superiority of our tracker on four key attributes in Fig.~\ref{attribute}.
All video sequences in OTB have been manually annotated with several challenging attributes, including \emph{Occlusion} (OCC), \emph{Illumination Variation} (IV), \emph{Scale Variation} (SV), \emph{Deformation} (DEF), \emph{Motion Blur} (MB), \emph{Fast Motion} (FM), \emph{In-Plane Rotation}, \emph{Out-of-Plane Rotation} (OPR), \emph{Out-of-View} (OV), \emph{Background Clutter} (BC), and \emph{Low Resolution} (LR).
\lchanged{M1.5.30}{\link{R1.5}}{As illustrated in Fig.~\ref{attribute}, our TM$^3$-deep tracker performs the best on OPR, DEF, IPR, and FM attributes when compared to some representative trackers.}
The favorable performance of our tracker on appearance variations (\emph{e.g.} OPR, IPR, and DEF) demonstrates the effectiveness of the discriminative similarity metric and the memory filtering strategy.
\subsection{Results on PTB}
The PTB benchmark database contains 100 video sequences with both RGB and depth data under highly diverse circumstances.
These sequences are grouped into the following aspects: target type (human, animal and rigid), target size (large and small), movement (slow and fast), presence of occlusion, and motion type (passive and active).
Generally, the human and animal targets including dogs and rabbits often suffer from out-of-plane rotation and severe deformation.

\changed{M3.6}{\link{R3.6}}{In PTB evaluation system, the ground truth of only 5 video sequences is shared for parameter tuning.
Meanwhile, the author make the ground truth of the remaining 95 video sequences inaccessible to public for fair comparison.
Hence, the compared algorithms, conducted on these 95 sequences, are allowed to submit their tracking results for performance comparison by an online evaluation server.}
Hence, the benchmark is fair and valuable in evaluating the effectiveness of different tracking algorithms.
Apart from 9 algorithms using RGB data included in PTB, we also compare the proposed tracker with 12 recent algorithms appeared
in Section \ref{ddde}.
 Tab.~\ref{tabptb} shows the average overlap ratio and ranking results of these compared trackers on 95 sequences.
 The top five trackers are TM$^3$-deep, TM$^3$-color, Staple, ACFN, and RaF.
 The results show that the proposed TM$^3$-deep tracker again achieves the state-of-the-art performance over other trackers.
 Specifically, it is worthwhile to mention that our method performs better than CFTs on large appearance variations (\emph{e.g.}, human and animal) and fast movement.

\renewcommand\arraystretch{1.0}
\begin{table*}[t]
\label{tabptb}
\centering
\begin{center}
\scriptsize
\caption{\footnotesize Results on the Princeton Tracking Benchmark with 95 video sequences: success rates and rankings (in parentheses) under different sequence categorizations. The best three results are highlighted by \textcolor[rgb]{1,0,0}{red}, \textcolor[rgb]{0,0,1}{blue}, and \textcolor[rgb]{0,1,0}{green}, respectively.}\label{tabptb}
\begin{tabular}{|p{1.5cm}|p{1.0cm}|p{0.9cm}|p{0.9cm}|p{0.9cm}|p{0.9cm}|p{0.9cm}|p{0.9cm}|p{0.9cm}|p{0.9cm}|p{0.9cm}|p{0.9cm}|p{0.9cm}|}
  \hline
  \multirow{2}{*}{Method} &{Avg.} &\multicolumn{3}{c|}{target type} &\multicolumn{2}{c|}{target size} &\multicolumn{2}{c|}{movement} &\multicolumn{2}{c|}{occlusion} &\multicolumn{2}{c|}{motion type} \\
  \cline{3-13}
  \cline{4-5}
  &Rank &human &animal &rigid &large &small &slow &fast &yes &no &passive &active \\
  \hline
TM$^3$-deep  &{ \textcolor[rgb]{1.00,0.00,0.00}{2.364(1)}} &\textcolor[rgb]{1.00,0.00,0.00}{0.612(1)} &\textcolor[rgb]{1.00,0.00,0.00}{0.672(1)} &\textcolor[rgb]{1.00,0.00,0.00}{0.691(1)} &\textcolor[rgb]{1.00,0.00,0.00}{0.586(1)} &0.502(11) &\textcolor[rgb]{1.00,0.00,0.00}{0.724(1)} &\textcolor[rgb]{1.00,0.00,0.00}{0.625(1)} &\textcolor[rgb]{0.00,0.00,1.00}{0.526(2)} &0.692(4) &\textcolor[rgb]{1.00,0.00,0.00}{0.701(1)} &\textcolor[rgb]{0.00,0.00,1.00}{0.551(2)} \\
TM$^3$-color &{\textcolor[rgb]{0.00,0.00,1.00}{3.818(2)}} &0.551(4) &\textcolor[rgb]{0.00,0.00,1.00}{0.657(2)} &0.547(10) &0.535(4) &0.513(9) &\textcolor[rgb]{0.00,0.00,1.00}{0.683(2)} &\textcolor[rgb]{0.00,0.00,1.00}{0.597(2)} &\textcolor[rgb]{0.00,1.00,0.00}{0.511(3)} &\textcolor[rgb]{0.00,0.00,1.00}{0.695(2)} &\textcolor[rgb]{0.00,1.00,0.00}{0.646(3)} &\textcolor[rgb]{1.00,0.00,0.00}{0.559(1)} \\
 Staple \cite{Bertinetto2016CVPR_mini} 	 &\textcolor[rgb]{0.00,1.00,0.00}{4.909(3)} &0.529(5) &\textcolor[rgb]{0.00,1.00,0.00}{0.619(3)} &0.553(8) &\textcolor[rgb]{0.00,1.00,0.00}{0.555(3)} &0.556(4) &0.652(4) &0.514(4) &0.455(9) &0.690(5) &0.631(5) &0.524(4) \\
 ACFN  \cite{Choi2017CVPR_mini}	 &5.182(4) &\textcolor[rgb]{0.00,0.00,1.00}{0.574(2)} &0.538(8) &\textcolor[rgb]{0.00,1.00,0.00}{0.599(3)} &0.505(8) &\textcolor[rgb]{0.00,1.00,0.00}{0.557(3)} &\textcolor[rgb]{0.00,1.00,0.00}{0.653(3)} &0.504(5) &0.482(7) &0.655(7) &0.603(8) &\textcolor[rgb]{0.00,1.00,0.00}{0.535(3)} \\
 RaF \cite{Zhangle2017CVPR_mini}	 &5.818(5) &\textcolor[rgb]{0.00,1.00,0.00}{0.572(3)} &0.542(7) &0.557(5) &0.515(7) &0.527(7) &0.582(10) &0.498(6) &0.492(5) &\textcolor[rgb]{1.00,0.00,0.00}{0.706(1)} &0.604(7) &0.483(6) \\
 DLSSVM \cite{Ning2016CVPR_mini}	 &6.455(6) &0.522(6) &0.584(4) &0.523(16) &\textcolor[rgb]{0.00,0.00,1.00}{0.563(2)} &\textcolor[rgb]{0.00,0.00,1.00}{0.559(2)} &0.597(6) &0.455(10) &0.458(8) &\textcolor[rgb]{0.00,1.00,0.00}{0.694(3)} &\textcolor[rgb]{0.00,0.00,1.00}{0.658(2)} &0.433(12) \\
 MEEM	\cite{zhang2014meem_mini} &7.455(7) &0.477(10) &0.510(10) &0.556(6) &0.523(5) &\textcolor[rgb]{1.00,0.00,0.00}{0.587(1)} &0.610(5) &0.436(12) &0.433(12) &0.644(9) &0.638(4) &0.458(8) \\
 KCF	\cite{henriques2015high_mini} &7.636(8) &0.464(11) &0.519(9) &0.594(4) &0.491(9) &0.547(5) &0.594(7) &0.494(7) &0.417(13) &0.668(6) &0.627(6) &0.480(7) \\
 BBT	\cite{Oron2016Best_mini} &9.091(9) &0.422(14) &0.553(5) &\textcolor[rgb]{0.00,0.00,1.00}{0.610(2)} &0.452(13) &0.511(10) &0.583(9) &\textcolor[rgb]{0.00,1.00,0.00}{0.521(3)} &0.448(10) &0.572(15) &0.575(9) &0.451(10) \\
 DSST \cite{danelljan2014accurate_mini}	 &9.909(10) &0.512(7) &0.551(6) &0.472(19) &0.480(10) &0.516(8) &0.591(8) &0.471(8) &0.408(15) &0.651(8) &0.561(11) &0.458(9) \\
 TGPR \cite{Gao2014_mini}	 &11.182(11) &0.484(8) &0.466(16) &0.498(18) &0.519(6) &0.530(6) &0.535(14) &0.459(9) &0.445(11) &0.611(13) &0.521(17) &0.504(5) \\
 DST \cite{DSTXiao2016Distractor_mini}	 &11.909(12) &0.436(12) &0.495(11) &0.554(7) &0.416(16) &0.467(12) &0.522(17) &0.413(14) &\textcolor[rgb]{1.00,0.00,0.00}{0.546(1)} &0.630(12) &0.546(14) &0.415(15) \\
 DAT \cite{DATPossegger2015In_mini} &12.364(13) &0.483(9) &0.484(13) &0.545(12) &0.473(12) &0.440(17) &0.543(13) &0.425(13) &0.495(4) &0.577(14) &0.521(18) &0.437(11) \\
 CNT \cite{zhang2016cnt_mini}	 &13.909(14) &0.424(13) &0.455(18) &0.551(9) &0.475(11) &0.459(15) &0.533(15) &0.377(17) &0.484(6) &0.563(16) &0.495(20) &0.421(13) \\
 Struck 	\cite{hare2011_mini} &14.909(15) &0.354(16) &0.470(14) &0.534(15) &0.450(14) &0.439(18) &0.580(11) &0.390(16) &0.304(19) &0.635(10) &0.544(15) &0.406(16) \\
 IMT \cite{yoon2015interacting_mini} &15.000(16) &0.324(17) &0.457(17) &0.545(13) &0.425(15) &0.444(16) &0.530(16) &0.445(11) &0.364(16) &0.536(18) &0.557(12) &0.418(14) \\
 VTD \cite{Kwon2010visual_mini}	 &15.273(17) &0.309(20) &0.488(12) &0.539(14) &0.386(18) &0.462(13) &0.573(12) &0.372(18) &0.283(20) &0.631(11) &0.549(13) &0.385(17) \\
	RGBdet \cite{song2013tracking_mini} 	 &17.636(18) &0.267(22) &0.409(20) &0.547(11) &0.319(22) &0.460(14) &0.505(20) &0.357(19) &0.348(17) &0.468(20) &0.562(10) &0.342(19) \\
ELK \cite{Oron2014klt_mini} 	 &17.727(19) &0.386(15) &0.434(19) &0.502(17) &0.352(20) &0.368(20) &0.514(19) &0.395(15) &0.416(14) &0.347(22) &0.528(16) &0.369(18) \\
	CT \cite{zhang2014ct_mini}   	 &19.727(20) &0.311(19) &0.467(15) &0.369(22) &0.390(17) &0.344(22) &0.486(21) &0.315(20) &0.233(23) &0.543(17) &0.421(21) &0.342(20) \\
	TLD \cite{kalal2012tracking_mini}  &20.273(21) &0.290(21) &0.351(22) &0.444(20) &0.325(21) &0.385(19) &0.516(18) &0.297(22) &0.338(18) &0.387(21) &0.502(19) &0.305(22) \\
	MIL \cite{Babenko2011_mini}  &20.636(22) &0.322(18) &0.372(21) &0.383(21) &0.366(19) &0.346(21) &0.455(22) &0.315(21) &0.256(21) &0.490(19) &0.404(23) &0.336(21) \\
	SemiB \cite{Grabner2008_mini}    &22.818(23) &0.225(23) &0.330(23) &0.327(23) &0.240(23) &0.316(23) &0.382(23) &0.244(23) &0.251(22) &0.327(23) &0.419(22) &0.232(23) \\
OF \cite{song2013tracking_mini}  	 &24.000(24) &0.179(24) &0.114(24) &0.234(24) &0.201(24) &0.175(24) &0.181(24) &0.188(24) &0.159(24) &0.223(24) &0.234(24) &0.168(24) \\
 \hline
\end{tabular}
\end{center}
\end{table*}

\subsection{Ablation Study and Parameter Sensitivity Analysis}
\begin{figure}
\begin{center}
\includegraphics[width=0.48\textwidth]{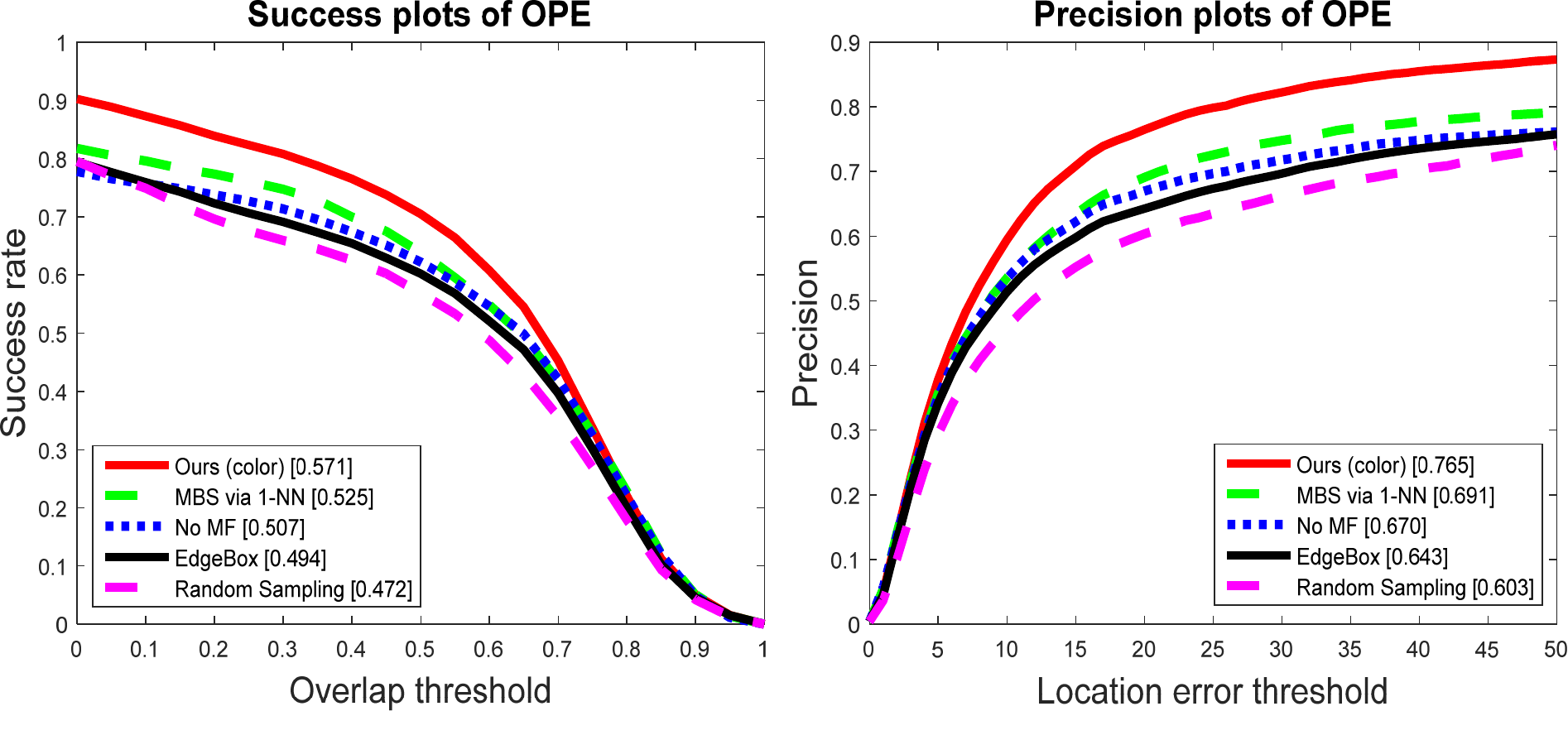}
\caption{\footnotesize Verification of four key components in our tracker on OTB-2013(36).
 ``MBS via $1$-NN" setting means that only the $1$-reciprocal nearest neighbor is utilized for region matching; ``No MF" setting denotes that the templates Tmpl$_r$ is also frequently updated as Tmpl$_e$ without the memory filtering strategy; ``Edgebox" setting means that the candidate regions are only generated by the EdgeBox approach; ``Random sampling" denotes that \texttt{Flow-r} process is retained and \texttt{Flow-e} process is removed.}
\label{remove}
\end{center}
\end{figure}
In this section, we firstly test the effects of several key components to see how they contribute to improving the final performance, and then investigate the parametric sensitivity of four parameters in the proposed tracker.

\subsubsection{Key component verification}
Several key components includes the scheme of multiple reciprocal nearest neighbors, memory filtering strategy, and the candidate generation scheme.
The influence of each component on the final tracking performance is illustrated in Fig.~\ref{remove}.

Firstly, to demonstrate that our multiple reciprocal nearest neighbors scheme is better than simply using one nearest neighbor, we compute MBS by only considering the single nearest neighbor (\emph{i.e.} ``MBS via $1$-NN'').
We see that ``1-NN" setting leads to the reduction of 4.6\% on average overlap rate when compared with the adopted ``MBS'' strategy.
Therefore, the utilization of multiple nearest neighbors in our tracker enhances the discriminative ability of the existing $1$-reciprocal nearest neighbor scheme.

Secondly, to investigate how the memory filtering strategy contributes to improving the final performance, we remove the memory filtering manipulation from our TM$^3$ tracker (\emph{i.e.} ``No MF'') and see the performance.
 The average success rate of such ``No MF" setting is as low as 50.7\%, with a 6.4\% reduction compared with the complete TM$^3$ tracker.
 As a result, the memory filtering strategy plays an importance role in obtaining satisfactory tracking performance.

 Lastly, to illustrate the effectiveness of two different candidate generation types with the multiple templates scheme, we design two experimental settings: ``Edgebox" and ``Random Sampling".
 In ``Random Sampling" setting, only \texttt{Flow-r} process is retained, which further causes to the invalidation of Tmpl$_e$ and the fusion scheme.
 In this case, ``Random Sampling" directly outputs $\mathcal{I}_r^*$ as the final tracking result without any fusion scheme, and then produces only one template Tmpl$_r$ for updating.
 As a consequence, the average overlap rate dramatically decreases from the original 57.1\% to 47.2\% if only \texttt{Flow-r} process is used.
 Likewise, in the ``Edgebox" setting, \texttt{Flow-e} process is retained and \texttt{Flow-r} process is removed, so only Tmpl$_r$ is involved for template updating.
Such setting achieves 49.4\% success rate and 64.3\% precision rate, which are much lower than the original result.

\subsubsection{Parametric sensitivity}
\label{sec:pse}

\label{sec:drep}
\begin{figure}
\begin{center}
\includegraphics[width=0.4\textwidth]{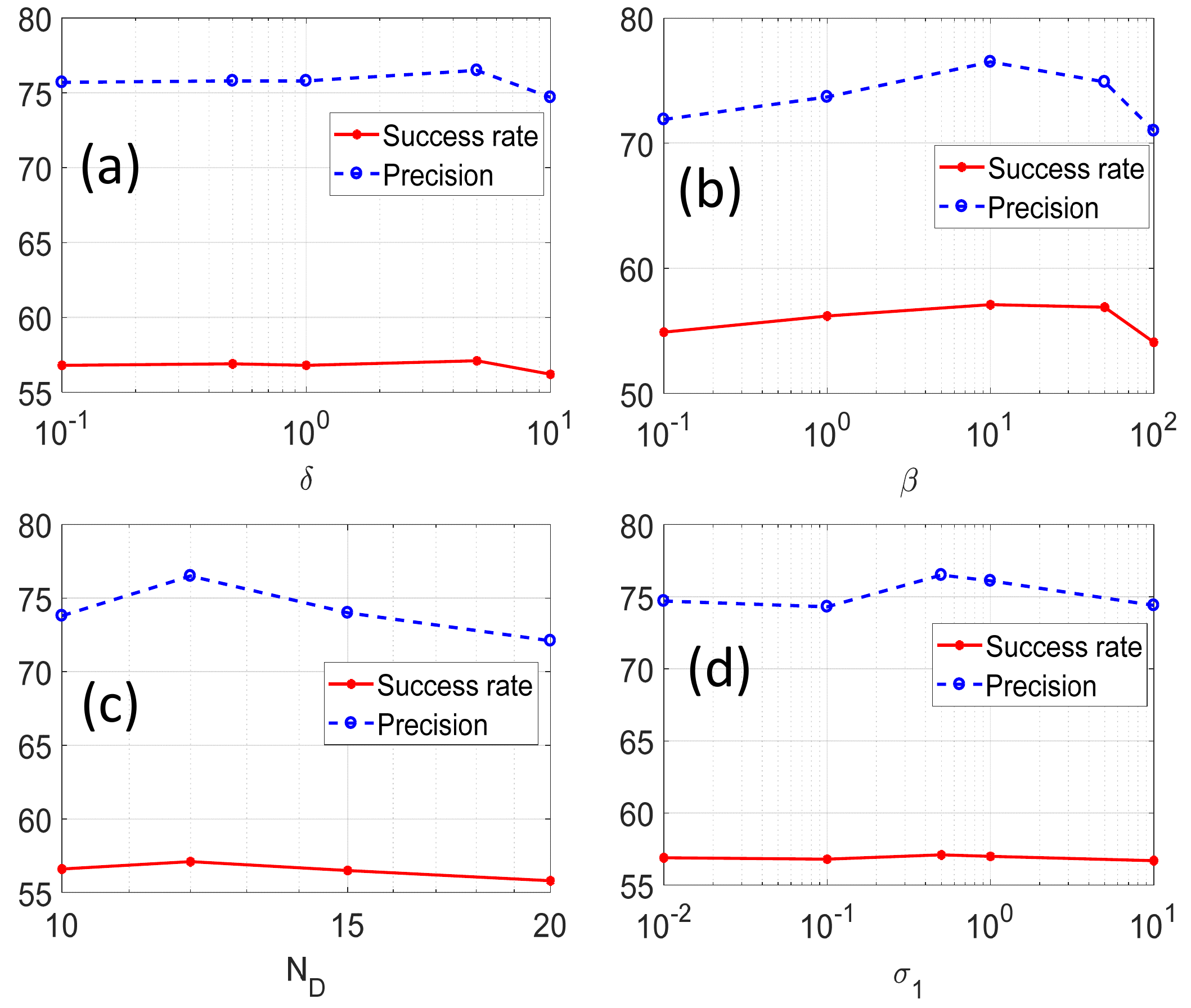}
\caption{\footnotesize Tracking performance of success rate and precision versus four varying parameters on OTB-2013(36).}
\label{param}
\end{center}
\end{figure}
Here we investigate the parametric sensitivity of the number of atoms in $\mathcal{D}$, the kernel width $\sigma_1$ in Eq.~\eqref{mBBPdef}, and two regularized parameters $\delta$ and $\beta$ in Eq.~\eqref{mainss}.
Fig.~\ref{param} illustrates that the proposed TM$^3$ tracker is robust to the variations of these parameters, so they can be easily tuned for practical use.

\section{Conclusion}
\label{sec:conclusion}
This paper proposes a novel template matching based tracker named TM$^3$ to address the limitations of existing CFTs.
The proposed MBS notably improves the discriminative ability of our tracker as revealed by both empirical and theoretical analyses.
Moreover, the memory filtering strategy is incorporated into the tracking framework to select ``representative" and ``reliable" previous tracking results to construct the current trustable templates, which greatly enhances the robustness of our tracker to appearance variations.
Experimental results on two benchmarks indicate that our TM$^3$ tracker equipped with the multiple reciprocal nearest neighbor scheme and the memory filtering strategy can achieve better performance than other state-of-the-art trackers.

\appendices
\section{The proof of Lemma \ref{proEsim}}
\label{sec:app1}
\changed{M1.4}{\link{R1.4}}{This section aims to simplify the formulation of $\mathbb{E}_{\text{MBS}}(\mathcal{P},\mathcal{Q})$ defined by Definition~\ref{definite} as follows\footnote{Our simplification process of $\mathbb{E}_{\text{MBS}}(\mathcal{P},\mathcal{Q})$ is similar to that of $\mathbb{E}_{\text{BBS}}(\mathcal{P},\mathcal{Q})$. Please refer to Section 3.1 in \cite{Dekel_BBS_mini}.}.}

Due to the independence among image patches, the integral in Eq.~(\ref{EBBP}) can be decoupled as:
\begin{equation} \label{eq:E_bbp2}
\begin{array}{l}\
\begin{split}
&\mathbb{E}_{\text{MBS}}(\mathcal{P},\mathcal{Q})=\\
&\int\limits_{p_1}\cdots\int\limits_{p_{N}} \int\limits_{q_1}\cdots\int\limits_{q_M} {\text{MBP}}(\mathbf{p}_i,\mathbf{q}_j) \prod\limits_{k=1}^{N}  f_{P}(p_k) \prod\limits_{l=1}^{M} f_{Q}(q_l)\mathrm{d}P\mathrm{d}Q,  \\
\end{split}
\end{array}
\end{equation}
where $\mathrm{d}P=\mathrm{d}p_1 \cdot \mathrm{d}p_2\cdots \mathrm{d}p_N$, and $\mathrm{d}Q=\mathrm{d}q_1 \cdot \mathrm{d}q_2 \cdots \mathrm{d}q_M$.
By introducing the indictor $\mathbb{I}$, it equals to 1 when $ \mathbf{q}_j = \text{NN}_r(\mathbf{p}_i,\mathcal{Q})  \wedge
 \mathbf{p}_i =\text{NN}_s(\mathbf{q}_j,\mathcal{P})$.
  The similarity $\text{MBP}(\mathbf{p}_i,\!\mathbf{q}_j)$ in Eq.~\eqref{mBBPdef} can be reformulated as:
\begin{small}
\begin{equation}\label{BBPdis}
\begin{split}
  &\text{MBP}(\mathbf{p}_i,\!\mathbf{q}_j)=\\
  &\mathrm{exp}\Bigg\{\!\! {-\frac{1}{\sigma_1}\!\!\sum_{k=1,k\neq i}^N \!\! \! \!\!\mathbb{I}\big[\!d(\mathbf{p}_k, \mathbf{q}_j)\! \leq\! d(\mathbf{p}_i, \mathbf{q}_j)\big]\!\! \cdot \!\!\!\!\!\sum_{l=1,l\neq i}^M \!\! \!\!\mathbb{I}\big[d(\mathbf{q}_l, \mathbf{p}_i)\! \leq \!d(\mathbf{p}_i, \mathbf{q}_j)\big]} \!\!\Bigg\},
  \end{split}
\end{equation}
\end{small}
which shares the similar formulation of  ${\text{BBP}}(\mathbf{p}_i,\mathbf{q}_j)$ in \cite{Dekel_BBS_mini} (see in Eq.~(7) on Page 5).
And next, by defining:
\begin{equation}\label{cpk}
 Cp_k = \int\limits_{-\infty}^{\infty}  \mathbb{I}[d(\mathbf{p}_k, \mathbf{q}_j) \leq d(\mathbf{p}_i, \mathbf{q}_j)]f_{P}(p_k)\mathrm{d}p_k\,,
\end{equation}
and assuming $d(\mathbf{p},\mathbf{q})=\sqrt{(\mathbf{p}-\mathbf{q})^2} = |\mathbf{p}-\mathbf{q}|$, we can rewrite Eq.~\eqref{cpk} as:
\begin{equation}\label{cpkn}
  Cp_k  \!\!= \!\!\!\!\int\limits_{-\infty}^{\infty}\!\!  \mathbb{I}\big[\mathbf{p}_k \!<\! \mathbf{q}_j^- \vee \mathbf{p}_k \!>\! \mathbf{q}_j^+ \big]f_P(p_k)\mathrm{d}p_k
  \!=\!F_{P}(q_j^+)-F_{P}(q_j^-)\,.
\end{equation}
Similarly, $Cq_l$ is:
\begin{equation}\label{cql}
Cq_l \!= \!\!\!\!\int\limits_{-\infty}^{\infty}\!\! \mathbb{I}\big[d(\mathbf{q}_l, \mathbf{p}_i) \leq d(\mathbf{p}_i, \mathbf{q}_j)\big]f_{Q}(q_l)\mathrm{d}q_l\!= \!F_{Q}(p_i^+)-F_{Q}(p_i^-)\,.
\end{equation}
Note that $Cp_k$ and $Cq_l$ only depend on $\mathbf{p}_i$, $\mathbf{q}_j$, and the underlying distributions $f_P(p)$ and $f_Q(q)$.
Therefore, $\mathbb{E}_{\text{MBS}}(\mathcal{P},\mathcal{Q})$ \changed{M1.5.31}{\link{R1.5}}{can be reformulated as}:
\begin{equation*}\label{rssim}
\begin{split}
&\mathbb{E}_{\text{MBS}}(\mathcal{P},\mathcal{Q})= \int_{p_i}\int_{q_j}\mathrm{d}p_i\mathrm{d}q_jf_{P}(p_i)f_{Q}(q_j)
\text{MBP}(\mathbf{p}_i,\mathbf{q}_j)\,.
\end{split}
\end{equation*}
Using the Taylor expansion with second-order approximation $\mathrm{exp}(-\frac{1}{\sigma}x)=1-\frac{1}{\sigma}x+\frac{1}{2\sigma^2}x^2$ and $NCp_k=\sum_{k=1,k\neq i}^N \mathbb{I}\big[d(\mathbf{p}_k, \mathbf{q}_j)\! \leq\! d(\mathbf{p}_i, \mathbf{q}_j)\big]$ by Eq.~\eqref{cpkn}, and $MCq_l = \sum_{l=1,l\neq i}^M \mathbb{I}\big[d(\mathbf{q}_l, \mathbf{p}_i)\! \leq \!d(\mathbf{p}_i, \mathbf{q}_j)\big]$ Eq.~\eqref{cql}, we have:
\begin{equation}\label{Tysim}
\begin{split}
&\mathbb{E}_{\text{MBS}}(\mathcal{P},\mathcal{Q})=\\
&1-\frac{1}{\sigma_1}\int_{p_i}\int_{q_j}(MCq_l)\cdot(NCp_k) f_{P}(p_i)f_{Q}(q_j)\mathrm{d}p_i\mathrm{d}q_j\\
&+\frac{1}{2\sigma_1^2}\int_{p_i}\int_{q_j}(MCq_l)^2\cdot(NCp_k)^2 f_{P}(p_i)f_{Q}(q_j)\mathrm{d}p_i\mathrm{d}q_j\,.
\end{split}
\end{equation}
Finally, after some straightforward algebraic manipulations, the $\mathbb{E}_{\text{MBS}}$ in Eq.~\eqref{Esim} can be easily obtained.

\section{The proof of Lemma \ref{proDsim}}
\label{sec:app2}
This section begins with the formulation of $\mathbb{E}_{\text{MBS}^2}(\mathcal{P},\mathcal{Q})$, and then computes the variance
 $ \mathbb{V}_{\text{MBS}}(\mathcal{P},\mathcal{Q}) =\mathbb{E}_{\text{MBS}^2}(\mathcal{P},\mathcal{Q}) - \mathbb{E}^2_{\text{MBS}}(\mathcal{P},\mathcal{Q})$ by Lemma \ref{proEsim}.

First, the similarity metric ${\text{MBP}^2}(\mathcal{P},\mathcal{Q})$ is obtained by Eq.~\eqref{BBPdis}, namely:
\begin{small}
\begin{equation}\label{sim2}
\begin{split}
  &\text{MBP}^2(\mathbf{p}_i,\mathbf{q}_j)=\\
  &\mathrm{exp}\Bigg\{\!\!\!-\!\frac{2}{\sigma_1}\!\!\sum_{k=1, k \neq i}^N \!\!\!\!\! \mathbb{I}\big[d(\mathbf{p}_k, \!\mathbf{q}_j)\! \leq\! d(\mathbf{p}_i, \! \mathbf{q}_j)\big]\!\! \cdot \!\!\!\!\sum_{l=1,l \neq i}^M\! \!\!\mathbb{I}\big[d(\mathbf{q}_l, \! \mathbf{p}_i) \!\leq\! d(\mathbf{p}_i,\! \mathbf{q}_j)\big]\!\!\Bigg\}.
  \end{split}
\end{equation}
\end{small}

Similar to the above derivation of $\mathbb{E}_{\text{MBS}}(\mathcal{P},\mathcal{Q})$ in Lemma~\ref{proEsim}, $\mathbb{E}_{\text{MBS}^2}(\mathcal{P},\mathcal{Q})$ can be computed as:
\begin{small}
\begin{equation}\label{Esim2}
\begin{split}
&\mathbb{E}_{\text{MBS}^2}(\mathcal{P},\mathcal{Q}) = 1-\\
&\frac{2MN}{\sigma_1}\!\!\! \iint\limits_{-\infty}^{~~~~\infty} \!\!\! \big[F_{P}(q^+\!)\!\!-\!\!F_{P}(q^-\!)\!\big]\!\!
\big[F_{Q}(p^+\!)\!\!-\!\!F_{Q}(p^-\!)\!\big]\!
f_{P}(p)f_{Q}(q)
\mathrm{d}p\mathrm{d}q+\\
&\frac{2M^2N^2}{\sigma_1^2}\!\!\!\iint\limits_{-\infty}^{~~~~\infty}\!\!\!
\big[\!F\!_{P}(q^+\!)\!\!-\!\!F\!_{P}(q^-\!)\!\big]\!^2\!
\big[\!F\!_{Q}(p^+\!)\!\!-\!\!F_{Q}(p^-\!)\!\big]\!^2\!
f_{P}(p)f_{Q}(q)\mathrm{d}p\mathrm{d}q.
\end{split}
\end{equation}
\end{small}

Next, $\mathbb{E}^2_{\text{MBS}}(\mathcal{P},\mathcal{Q})$ can be obtained by Lemma \ref{proEsim} with its second-order approximation, namely:
\begin{small}
\begin{equation}\label{sE2sim}
\begin{split}
&\mathbb{E}^2_{\text{MBS}}(\mathcal{P},\mathcal{Q}) = 1+\\
&\frac{M^2N^2}{\sigma_1^2}\! \Bigg\{\!\iint\limits_{-\infty}^{~~~~\infty} \!\! [\!F_{P}(q^+)\!\!-\!\!F_{P}(q^-)\!]
[F_{Q}(p^+)\!\!-\!\!F_{Q}(p^-)]f_{P}(p)f_{Q}(q)
\mathrm{d}p\mathrm{d}q\!\Bigg\}^{\!\!2}\\
&
-\frac{2MN}{\sigma_1}\!\!\! \iint\limits_{-\infty}^{~~~~\infty} \!\!\! \big[F_{P}(q^+\!)\!\!-\!\!F_{P}(q^-\!)\!\big]\!\!
\big[F_{Q}(p^+\!)\!\!-\!\!F_{Q}(p^-\!)\!\big]\!
f_{P}(p)f_{Q}(q)
\mathrm{d}p\mathrm{d}q\\
&
+\frac{M^2N^2}{\sigma_1^2}\!\!\!\iint\limits_{-\infty}^{~~~~\infty}\!\!\!
\big[\!F\!_{P}(q^+\!)\!\!-\!\!F\!_{P}(q^-\!)\!\big]^2\!
\big[\!F\!_{Q}(p^+\!)\!\!-\!\!F_{Q}(p^-\!)\!\big]^2\!\!
f_{P}(p)f_{Q}(q)\mathrm{d}p\mathrm{d}q.
\end{split}
\end{equation}
\end{small}

As a result, Lemma \ref{proDsim} can be easily proved after some straightforward algebraic manipulations on Eq.~\eqref{Esim2} and Eq.~\eqref{sE2sim}.
\section{The proof of Lemma \ref{Lemmare}}
\label{sec:app3}
By Lemma \ref{proEsim} and Eq.~\eqref{sE2sim}, $\mathbb{E}_{\text{MBS}^2}(\mathcal{P},\mathcal{Q})$ can be represented by:
\begin{equation*}\label{relationship}
\begin{split}
  &\mathbb{E}_{\text{MBS}^2}(\mathcal{P},\mathcal{Q}) = 4\mathbb{E}_{\text{MBS}}(\mathcal{P},\mathcal{Q})-3+\\
  &\frac{3MN}{\sigma_1} \!\! \iint\limits_{-\infty}^{~~~~\infty} \!\!\! \big[F_{P}(q^+\!)\!\!-\!\!F_{P}(q^-\!)\!\big]\!\!
\big[F_{Q}(p^+\!)\!\!-\!\!F_{Q}(p^-\!)\!\big]\!
f_{P}(p)f_{Q}(q)
\mathrm{d}p\mathrm{d}q\,.
\end{split}
\end{equation*}
Therefore, by Lemma~\ref{proDsim} and Eq.~\eqref{Tysim}, \changed{M1.2}{\link{R1.2}}{we have:}
\begin{small}
\begin{equation}\label{diff}
\begin{split}
 &\mathbb{E}_{\text{MBS}^2}(\mathcal{P},\mathcal{Q})-
 \mathbb{E}_{\text{MBS}}(\mathcal{P},\mathcal{Q}) =
 3\mathbb{E}_{\text{MBS}}(\mathcal{P},\mathcal{Q})-3\\
  &+\frac{3MN}{\sigma_1} \!\! \iint\limits_{-\infty}^{~~~~\infty} \!\!\! \big[F_{P}(q^+\!)\!\!-\!\!F_{P}(q^-\!)\!\big]\!\!
\big[F_{Q}(p^+\!)\!\!-\!\!F_{Q}(p^-\!)\!\big]\!
f_{P}(p)f_{Q}(q)
\mathrm{d}p\mathrm{d}q\\
 &
 =\frac{3M^2N^2}{2\sigma_1^2}\!\!\!\iint\limits_{-\infty}^{~~~~\infty}\!\!\!
\big[\!F\!_{P}(q^+\!)\!\!-\!\!F\!_{P}(q^-\!)\!\big]^{\!2}\!
\big[\!F\!_{Q}(p^+\!)\!\!-\!\!F_{Q}(p^-\!)\!\big]^{\!2}\!\!
f_{P}(p)f_{Q}(q)\mathrm{d}p\mathrm{d}q\\
 &
> 0\,,
\end{split}
\end{equation}
\end{small}
which completes the proof.

\section*{Acknowledgements}
The authors would like to thank Cheng Peng from Shanghai Jiao Tong University for his work on algorithm comparisons, and also sincerely appreciate the anonymous reviewers for their insightful comments.
\bibliographystyle{IEEEtran}



\end{document}